\def\ud{\, \mathrm{d}}
\DeclareMathOperator*{\argmax}{argmax}
\DeclareMathOperator*{\argmin}{argmin}
\DeclareMathOperator{\sgn}{sgn}
\newtheorem{theorem}{Theorem}
\newtheorem{corollary}[theorem]{Corollary}
\newtheorem{definition}[theorem]{Definition}
\newtheorem{lemma}[theorem]{Lemma}
\newtheorem{remark}[theorem]{Remark}
\numberwithin{theorem}{section}
\numberwithin{equation}{section}
\begin{document}




\title{Covariance-based Dissimilarity Measures Applied to Clustering Wide-sense Stationary Ergodic Processes}






\author{Qidi Peng\footnote{Institute of Mathematical Sciences, Claremont Graduate University, Claremont, CA 91711. Email: qidi.peng@cgu.edu.} \and Nan Rao\footnote{School of Mathematical Sciences, Shanghai Jiao Tong University, Shanghai, China. Email: nan.rao@sjtu.edu.cn.} \and Ran Zhao\footnote{Institute of Mathematical Sceinces and Drucker School of Management, Claremont Graduate University, Claremont, CA 91711. Email: ran.zhao@cgu.edu.}}

\date{}

\maketitle

\begin{abstract}
We introduce a new unsupervised learning problem: clustering wide-sense stationary ergodic stochastic processes. A covariance-based dissimilarity measure together with asymptotically consistent algorithms is designed for clustering offline and online datasets, respectively. We also suggest a formal criterion on the efficiency of dissimilarity measures, and discuss of some approach to improve the efficiency of our clustering algorithms, when they are applied to cluster particular type of processes, such as self-similar processes with wide-sense  stationary ergodic increments. Clustering synthetic data and real-world data are provided as examples of applications.

\begin{flushleft}
\textbf{Keywords: } cluster analysis $\cdot$ wide-sense stationary ergodic processes $\cdot$ covariance-based dissimilarity measure $\cdot$ self-similar processes

\textbf{MCS (2010): } 62-07 $\cdot$  60G10 $\cdot$ 62M10
\end{flushleft}


\end{abstract}



\newpage
\section{Introduction}
\label{introduction}
Cluster analysis, as a core category of unsupervised learning techniques, allows to discover hidden patterns in data where one does not know the true answer upfront.  Its goal is to assign a heterogeneous set of objects into non-overlapping clusters, where in each cluster any two objects are more related to each other than to objects in other clusters. Given its exploratory nature, clustering has nowadays a number of applications in various fields of both industry and scientific research, such as biological and medical research \citep{Damian2007,Zhao2014,Jaaskinen2014}, information technology \citep{Jain1999,Slonim2005},  signal and image processing \citep{Rubinstein2013}, geology \citep{Juo2001} and finance \citep{Pavlidis2006,Bastos2014,Ieva2016}. There exists a rich literature of cluster analysis on random vectors, where the objects, waiting to be clustered, are sampled from high-dimensional joint distributions. There is no shortage of such clustering algorithms \citep{Xu2005}. However, stochastic processes are quite a different setting from random vectors, since their observations (sample paths) are sampled from processes distributions. While the cluster analysis on random vectors has developed aggressively, clustering on stochastic processes receives much less attention. Today cluster analysis on stochastic processes deserves increasingly intense study, thanks to their vital importance to many applied areas, where the collected information are indexed by real time and are especially long. Examples of these time-indexed information include biological data, financial data, marketing data, surface weather data, geological data and video/audio data, etc. 

Recall that in the setting of random vectors, a process of clustering often consists of two steps:
\begin{description}
\item[\textbf{Step 1}] One suggests a suitable dissimilarity measure to describe the distance between 2 objects, under which \enquote{two objects are close to each other} becomes meaningful.
\item[\textbf{Step 2}] One designs an enough accurate and computationally efficient clustering function based on the above dissimilarity measure.
\end{description}
Clustering stochastic processes is performed in a similar way but new challenges may arise in both Step 1 and Step 2. Intuitively, one can always apply existing  random vectors clustering approaches to cluster arbitrary stochastic processes, such as non-hierarchical approaches ($K$-means clustering methods) and hierarchical approaches (agglomerative method, divisive method) \citep{Hartigan1975}, based on \enquote{naive} dissimilarity measures (e.g., Euclidean distance, Manhattan distance or Minkowski distance). However, one faces at least 2 potential risks when applying the above approaches to clustering stochastic processes:
\begin{description}
\item[\textbf{Risk 1}] These approaches might suffer from their huge complexity costs, due to the great length of their sample paths. As a result classical clustering algorithms are often computationally forbidding \citep{Ieva2016,Peng2008}.
\item[\textbf{Risk 2}] These approaches might suffer from over-fitting issues. For example, clustering stationary or periodic processes based on \textit{Euclidean distance} between the paths, without considering their path properties will result in \enquote{over fitting, bad clusters} situation.
\end{description}
In summary, classical dissimilarity measures or clustering strategies would fail in clustering stochastic processes.

Fortunately, the complexity cost and the over-fitting errors of clustering processes could be largely reduced, if one is aware of the fact that a stochastic process often possesses fine paths features (e.g., stationarity, Markov property, self-similarity, sparsity, seasonality, etc.), which is unlike an arbitrary random vector. An appropriate dissimilarity measure then should be chosen to be able to capture these paths features. Clustering processes is then performed to group any two sample paths into one group, if they are relatively close to each other under that particular dissimilarity measure.  Below are some examples provided in the literature.

\cite{Peng2008} proposed a dissimilarity measure between two special sample paths of processes. In their setting it is supposed that, for each path only sparse and irregularly spaced measurements with additional measurement errors are available. Such features occur commonly in longitudinal studies and online trading data. Based on this particular dissimilarity measure, classification and cluster analysis could be made. \cite{Ieva2016} developed a new algorithm to perform clustering of multivariate and functional data, based on a covariance-based dissimilarity measure. Their attention is focused on the specific case of a set of observations from two populations, whose probability distributions have equal mean but differ in terms of covariances. \cite{khaleghi2016} designed consistent algorithms for clustering strict-sense stationary ergodic processes (see the forthcoming Eq. (\ref{strictergodic}) for the definition of strict-sense ergodicity), where the dissimilarity measure is proposed as distance of process distributions. It is worth noting that the consistency of their algorithms is guaranteed thanks to the assumption of strict-sense ergodicity.

In this framework, we aim to design asymptotically consistent algorithms to cluster a general class of stochastic processes, i.e., wide-sense stationary ergodic processes (see Definition \ref{process} below). Asymptotically consistent algorithms can be obtained for this setting, since the covariance stationarity and ergodicity allow the process to present some featured asymptotic behavior with respect to their length, rather than to the total number of paths. 
\begin{definition}[Wide-sense stationary ergodic process]
\label{process}
A stochastic process $X=\{X_t\}_{t\in T}$ (the time indexes set $T$ can be either $\mathbb R_+=[0,+\infty)$ or $\mathbb N=\{1,2\ldots\}$) is called wide-sense stationary if its mean and covariance structure are finite and time-invariant: $\mathbb E(X_t)=\mu$ for any $t\in T$, and for any subset $(X_{i_1},\ldots,X_{i_r})$, its covariance matrix remains invariant subject to any time shift $h>0$:
$$
\mathbb Cov(X_{i_1},\ldots,X_{i_r})= \mathbb Cov(X_{i_1+h},\ldots,X_{i_r+h}).
$$
Denote by $\gamma$ the auto-covariance function of $X$. Then $X$ is further called weakly ergodic (or wide-sense ergodic) if it is ergodic for the mean and the second-order moment:
\begin{itemize}
\item If $X$ is a continuous-time process (e.g., $T=\mathbb R_+$), then  it satisfies for any $s\in \mathbb R_+$, 
$$
\frac{1}{h}\int_s^{s+h}X_u\ud u\xrightarrow[h\to+\infty]{a.s.}\mu,
$$
and
$$\frac{1}{h}\int_s^{s+h}(X_{u+\tau}-\mu)(X_u-\mu)\ud u\xrightarrow[h\to+\infty]{a.s.}\gamma(\tau),~\mbox{for all  $\tau\in \mathbb R_+$},
$$
where $\xrightarrow[]{a.s.}$ denotes the almost sure convergence (convergence with probability $1$).
\item If $X$ is a discrete-time process (e.g., $T=\mathbb N$), then  it satisfies for any $s\in \mathbb N\cup\{0\}$, 
$$
\frac{X_s+X_{s+1}+\ldots+X_{s+h}}{h+1}\xrightarrow[h\in\mathbb N,~h\to+\infty]{a.s.}\mu,
$$
and
$$\frac{\sum_{u=s}^{s+h}(X_{u+\tau}-\mu)(X_u-\mu)}{h+1}\xrightarrow[h\in\mathbb N,~h\to+\infty]{a.s.}\gamma(\tau),~\mbox{for all $\tau\in \mathbb N\cup\{0\}$}.
$$
\end{itemize}
\end{definition}
Wide-sense stationarity and ergodicity are believed to be a very general assumption, at least in the following senses:
\begin{enumerate}
\item The assumption that each process is generated by some mean and covariance structure is sufficient for capturing all features of a wide-sense stationary ergodic process. In other words, our algorithms intend to cluster means and auto-covariance functions, not process distributions. 
\item Wide-sense stationary ergodic process partially extends the strict-sense one. A finite-variance strict-sense stationary ergodic process (see Eq. (\ref{strictergodic}) for its definition) is also wide-sense stationary ergodic. However strict-sense stationary ergodic stable processes are not wide-sense stationary, because their variances explode \citep{Cambanis1987,Samorodnitsky2004}.
\item A Gaussian process can be \textit{fully identified only} by its mean and covariance structure. Then a wide-sense stationary  ergodic Gaussian process is also strict-sense stationary ergodic.
\item In the clustering problem, the dependency among the sample paths can be arbitrary. 
\end{enumerate}
There is a long list of processes which are wide-sense stationary ergodic, but not necessarily stationary in the strict sense. The examples of wide-sense stationary processes below are not exhausted.
\begin{description}
\item[\textbf{Example 1}] Non-independent White Noise.

Let $U$ be a random variable uniformly distributed over $(0,2\pi)$ and define
$$
Z(t):=\sqrt{2}\cos(tU),~\mbox{for}~t\in\mathbb N.
$$
The process $Z=\{Z(t)\}_{t\in\mathbb N}$ is then a white noise because it verifies 
$$
\mathbb E(Z(t))=0,~\mathbb Var(Z(t))=1~\mbox{and}~\mathbb Cov(Z(s),Z(t))=0,~\mbox{for}~s\neq t.
$$
We claim that $Z$ is wide-sense stationary ergodic, which can be obtained by using the Kolmogorov's strong law of large numbers, see e.g. Theorem 2.3.10 in \cite{Sen1993}. However $Z$ is not strict-sense stationary since
$$
(Z(1),Z(2))\neq (Z(2),Z(3))~\mbox{in law}.
$$
Indeed, it is easy to see that
$$
0<\mathbb E\big(Z(1)^2Z(2)\big)\neq\mathbb E\big(Z(2)^2Z(3)\big)=0.
$$
\item[\textbf{Example 2}] Auto-regressive Models.

It is well-known that an auto-regressive model $\{Y(t)\}_t\sim AR(1)$ in the form:
\begin{equation}
\label{AR1}
Y(t)=aY(t-1)+Z(t),~|a|<1,a\neq 0,~\mbox{for}~t\in\mathbb N
\end{equation}
is wide-sense stationary ergodic. However it is not necessarily strict-sense stationary ergodic, when the joint distributions of the white noise $\{Z(t)\}_t$ are not invariant with time-shifting (e.g., take $\{Z(t)\}_t$ to be the white noise in Example 1).
\item[\textbf{Example 3}] Increment Process of Fractional Brownian Motion.

Let $\{B^H(t)\}_t$ be a fractional Brownian motion with Hurst index $H\in(0,1)$ (see \cite{Mandelbrot1968}). For each $h>0$, its increment process $\{Z^h(t):=B^H(t+h)-B^H(t)\}_t$ is finite-variance strict-sense stationary ergodic \citep{Magdziarz2011}. As a result it is also wide-sense stationary ergodic. More detail will be discussed in Section \ref{sec::exper_results}.

\item[\textbf{Example 4}] Increment Process of More General Gaussian Processes.

\cite{Peng2012} introduced a general class of zero-mean Gaussian processes $X=\{X(t)\}_{t\in\mathbb R}$ having stationary increments. Its variogram $\nu(t):=2^{-1}\mathbb E(X(t)^2)$ satisfies:
\begin{description}
\item[(1)] There is a non-negative integer $d$ such that $\nu$ is $2d$-times continuously differentiable over $[-2,2]$, but not $2(d+1)$-times continuously differentiable over $[-2,2]$.
\item[(2)] There are 2 real numbers $c\neq 0$ and $s_0\in(0,2)$, such that for all $t\in[-2,2]$,
$$
\nu(t)=\nu^{(2d)}(0)+c|t|^{s_0}+r(t),
$$
where the remainder $r(t)$ satisfies:
\begin{itemize}
\item $r(t)=o(|t|^{s_0})$, as $t\to0$.
\item There are two real numbers $c' > 0$, $\omega > s_0$ and an integer $q > \omega+1/2$ such that $r$ is $q$-times continuously differentiable on $[-2, 2]\backslash\{0\}$ and for all $t \in [-2, 2]\backslash\{0\}$, we have
$$
|r^{(q)}(t)|\le c'|t|^{\omega-q}.
$$
\end{itemize}
\end{description}
It is shown that the process $X$ extends fractional Brownian motion and it also has wide-sense (and strict-sense) stationary ergodic increments when $d+s_0/2\in(0,1)$ (see Proposition 3.1 in \cite{Peng2012}).
\end{description}
The problem of clustering processes via their means and covariance structures leads us to formulating our clustering targets in the following way.
\begin{definition}[Ground-truth $G$ of covariance structures]
\label{ground-truth}
Let 
$$
G=\big\{G_1,\ldots,G_\kappa\big\}
$$
be a partitioning of $\mathbb N=\{1,2,\ldots\}$ into $\kappa$ disjoint sets $G_k$, $k=1,\ldots,\kappa$, such that the means and covariance structures of $\mathbf x_i$, $i\in \mathbb N$ are identical, if and only if $i\in G_k$ for some $k=1,\ldots,\kappa$. Such $G$ is called \textit{ground-truth of covariance structures}. We also denote by $G|_{N}$ the restriction of $G$ to the first $N$ sequences:
$$
G|_{N}=\big\{G_k\cap \{1,\ldots,N\}:~k=1,\ldots,\kappa\big\}.
$$
\end{definition}
Our clustering algorithms will aim to output the ground-truth partitioning $G$, as the sample length grows. Before stating these algorithms, we introduce the inspiring framework done by \citet{khaleghi2016}.

\subsection{Preliminary Results: Clustering Strict-sense Stationary Ergodic Processes}
\label{Preliminary_Results}
\cite{khaleghi2016} considered the problem of clustering strict-sense stationary ergodic processes. The main fruit in \cite{khaleghi2016} is obtaining the so-called asymptotically consistent algorithms to cluster processes of that type. We briefly state their work below. Depending on how the information is collected, the stochastic processes clustering problems consist of dealing with two models: offline setting and online setting.
\begin{description}
\item[\textbf{Offline setting:}] The observations are assumed to be a finite number $N$ of paths: 
$$
\mathbf{x}_1 = \Big(X_1^{(1)},\ldots, X_{n_1}^{(1)}\Big),\ldots,\mathbf{x}_N = \Big(X_1^{(N)},\ldots, X_{n_N}^{(N)}\Big).
$$
Each path is generated by one of the $\kappa$ different unknown process distributions. In this case, an asymptotically consistent clustering function should satisfy the following.
\begin{definition}[Consistency: offline setting]
\label{consistency:offline}
A clustering function $f$ is consistent for a set of sequences $S$ if $f(S,\kappa)=G$. Moreover, denoting $n=\min\{n_1,\ldots,n_N\}$, $f$ is called strongly asymptotically consistent in the offline sense if with probability $1$ from some $n$ on it is consistent on the set $S$, i.e.,
$$
\mathbb P\left(\lim_{n\to\infty}f(S,\kappa)=G\right)=1.
$$
It is called weakly asymptotically consistent if $\lim\limits_{n\to\infty}\mathbb P(f(S,\kappa)=G)=1$.
\end{definition}
\item[\textbf{Online setting:}] In this setting the observations, having growing length and number of scenarios with respect to time $t$, are denoted by 
$$
\mathbf{x}_1 = \Big(X_1^{(1)},\ldots, X_{n_1}^{(1)}\Big),\ldots,\mathbf{x}_{N(t)} = \Big(X_1^{(N(t))},\ldots, X_{n_{N(t)}}^{(N(t))}\Big),
$$
where the index function $N(t)$ is non-decreasing with respect to $t$. 

Then an asymptotically consistent online clustering function is defined below:
\begin{definition}[Consistency: online setting]
\label{consistency:online}
A clustering function is strongly (RESP. weakly) asymptotically consistent in the online sense, if for every $N\in\mathbb N$ the clustering $f(S(t),\kappa)|_N$ is strongly (RESP. weakly) asymptotically consistent in the offline sense, where $f(S(t),\kappa)|_N$ is the clustering $f(S(t),\kappa)$ restricted to the first $N$ sequences:
$$
f(S(t),\kappa)|_N=\left\{f(S(t),\kappa)\cap \{1,\ldots,N\}:~k=1,\ldots,\kappa\right\}.
$$
\end{definition}
\end{description}
There is a detailed discussion on the comparison of offline and online settings in \cite{khaleghi2016}, stating that these two settings have significant differences, since using the offline algorithm in the online setting by simply applying it to the entire data observed at every time step, does not result in an asymptotically  consistent algorithm. Therefore separately and independently studying these two settings becomes necessary and meaningful. 

As the main results in \cite{khaleghi2016}, asymptotically consistent clustering algorithms for both offline and online settings are designed. They are then successfully applied to clustering synthetic and real data sets.

Note that in the framework of \cite{khaleghi2016}, a key step is introduction to the so-called \textit{distributional distance} \citep{gray1988}: the distributional distance between a pair of process distributions $\rho_1$, $\rho_2$ is defined to be
\begin{equation}
\label{d1}
d(\rho_1,\rho_2)=\sum_{m,l=1}^{\infty}w_m w_l\sum_{B\in B^{m,l}}\left|\rho_1(B)-\rho_2(B)\right|,
\end{equation}
where:
\begin{itemize}
\item The sets $B^{m,l}$, $m,l\ge1$ are obtained via the partitioning of $\mathbb R^m$ into cubes of dimension $m$ and volume $2^{-ml}$, starting at the origin.
\item The sequence of weights $\{w_j\}_{j\ge1}$ is positive and decreasing to zero. Moreover it should be chosen such that the series in (\ref{d1}) is convergent. The weights are often suggested to give precedence to earlier clusterings, protecting the clustering decisions from the presence of the newly observed sample paths, whose corresponding distance estimates may not yet be accurate. For instance, it is set to be $w_j=1/j(j+1)$ in \cite{khaleghi2016}.
\end{itemize}
Further, the distance between two sample paths $\mathbf x_1$, $\mathbf x_2$ of stochastic processes is given by
\begin{equation}
\label{hatd1}
\widehat d(\mathbf x_1,\mathbf x_2)=\sum_{m=1}^{m_n}\sum_{l=1}^{l_n}w_m w_l\sum_{B\in B^{m,l}}|\nu(\mathbf x_1,B)-\nu(\mathbf x_2,B)|,
\end{equation}
where:
\begin{itemize}
\item $m_n,l_n$ ($\le n$) can be arbitrary sequences of positive integers  increasing to infinity, as $n\to\infty$. 
\item For a process path  $\mathbf x=(X_1,\ldots,X_n)$, and an event $B$, $\nu(\mathbf x,B)$ denotes the average times that the event $B$ occurs over $n-m_n+1$ time intervals. More precisely,
$$
\nu(\mathbf x,B):=\frac{1}{n-m_n+1}\sum_{i=1}^{n-m_n+1}\mathds 1\{(X_i,\ldots,X_{i+m_n-1})\in B\}.
$$
\end{itemize}
The process distribution $X$ from which $\mathbf x$ is sampled is called strictly ergodic if
\begin{equation}
\label{strictergodic}
\mathbb P\left(\lim_{n\to\infty}\nu(\mathbf x,B)=\mathbb P(X\in B)\right)=1,~\mbox{for all $B$}.
\end{equation}
The assumption that the processes are ergodic leads to that $\widehat d$ is a strongly consistent estimator of $d$: 
$$
\mathbb P\left(\lim_{n\to\infty}\widehat d(\mathbf x_1,\mathbf x_2)=d(\rho_1,\rho_2)\right)=1,
$$
where $\rho_1,\rho_2$ are the process distributions corresponding to $\mathbf x_1,\mathbf x_2$, respectively.

Based on the distances $d$ and their estimates $\widehat d$, the asymptotically consistent algorithms for clustering stationary ergodic processes in each of the offline and online settings are provided (see Algorithms 1, 2 and Theorems 11, 12 in \cite{khaleghi2016}). \cite{khaleghi2016} also show that their methods can be implemented efficiently: they are at most quadratic in each of their arguments, and are linear (up to log terms) in some formulations.

\subsection{Statistical Setting: Clustering Wide-sense Stationary Ergodic Processes}
Inspired by the framework of \cite{khaleghi2016}, we consider the problem of clustering \textit{wide-sense stationary ergodic processes}.  We first introduce the following \textit{covariance-based dissimilarity measure}, which is one of the main contributions of this paper. 
\begin{definition}{(Covariance-based dissimilarity measure)}
\label{definition1}
The covariance-based dissimilarity measure $d^*$ between a pair of processes $X^{(1)}$, $X^{(2)}$ (in fact $X^{(1)}$, $X^{(2)}$ denote two covariance structures, each may contain different process distributions) is defined as follows:
\begin{eqnarray}
\label{def:d*}
&&d^*\big(X^{(1)},X^{(2)}\big):= \sum_{m,l = 1}^{\infty} w_m w_l\nonumber\\
&&\times\mathcal M\left(
\left(\mathbb E\big(X_{l\ldots l+m-1}^{(1)}\big),
\mathbb Cov\big(X_{l\ldots l+m-1}^{(1)}\big)\right), 
\left(
\mathbb E\big(X_{l\ldots l+m-1}^{(2)}\big), \mathbb Cov\big(X_{l\ldots l+m-1}^{(2)}\big)\right)\right),\nonumber\\
\end{eqnarray}
where:
\begin{itemize}
\item For $j=1,2$, $\{X_l^{(j)}\}_{l\in\mathbb N}$ denotes some path sampled from the process $X^{(j)}$. We assume that all possible observations of the process $X^{(j)}$ is a subset of $\{X_l^{(j)}\}_{l\in\mathbb N}$. For $l'\ge l\ge 1$, we define the shortcut notation  $X_{l\ldots l'}^{(j)}:=(X_{l}^{(j)},\ldots,X_{l'}^{(j)})$.
\item The function $\mathcal M$ is defined by: for any $p_1,p_2,p_3\in\mathbb N$, any 2 vectors $v_1,v_2\in\mathbb R^{p_1}$ and any 2 matrices $A_1,A_2\in \mathbb R^{p_2\times p_3}$,
\begin{equation}
\label{M}
\mathcal M((v_1,A_1),(v_2,A_2)):= \left|v_1-v_2\right|+\rho^*\left(A_1,A_2\right).
\end{equation}
\item The distance $\rho^*$ between 2 equal-sized matrices $M_1,M_2$ is defined to be
\begin{equation}
\label{def:rho}
\rho^*(M_1,M_2):=\|M_1-M_2\|_F,
\end{equation}
with $\|\cdot\|_F$ being the \textit{Frobenius norm}:

for an arbitrary matrix $M=\{M_{ij}\}_{i=1,\ldots,m; j=1,\ldots,n}$,
$$
\|M\|_F:=\sqrt{\sum_{i=1}^m\sum_{j=1}^n|M_{ij}|^2}.
$$
Introduction to the matrices distance $\rho^*$ is inspired by \cite{herdin2005}. The matrices distance given in \cite{herdin2005} is used to measure the distance between 2 correlation matrices. However, our distance $\rho^*$ is a modification of the one in the latter paper. Indeed, unlike \cite{herdin2005}, $\rho^*$ is a well-defined metric distance, as it satisfies the triangle inequalities. 
\item The sequence of positive weights $\{w_j\}$ is chosen such that $d^*(X^{(1)},X^{(2)})$ is finite. Observe that the distances $|\cdot|$ and $\rho^*$ in (\ref{def:d*})  do not depend on $l$, as a result we necessarily have
\begin{equation}
\label{sum_weights}
\sum_{l=1}^\infty w_l<+\infty.
\end{equation}
In practice a typical choice of weights we suggest is $w_j=1/j(j+1)$, $j=1,2,\ldots$. This is because, for most of the well-known covariance stationary ergodic processes (causal $ARMA(p,q)$, increments of fractional Brownian motions, etc.), their auto-covariance functions are absolutely summable: denote by $\gamma_X$ the auto-covariance function of $\{X_t\}_{t}$,
\begin{equation}
\label{ergodic_condition}
\sum_{h=-\infty}^{+\infty}\left|\gamma_X(h)\right|<+\infty.
\end{equation}
\cite{Slezak2017} pointed out that (\ref{ergodic_condition}) is a sufficient condition for $\{X_t\}$ being mean-ergodic. However (\ref{ergodic_condition}) does not necessarily imply that $\{X_t\}$ is covariance-ergodic. It becomes a sufficient and necessary condition if $\{X_t\}$ is Gaussian. 
Therefore subject to (\ref{ergodic_condition}), taking $w_j=1/j(j+1)$, we obtain for any integer $N>0$,
\begin{eqnarray}
\label{ineq_1}
&&\sum_{m,l = 1}^{N} w_m w_l \left|\mathbb E\left(X_{l\ldots l+m-1}^{(1)}\right)-\mathbb E\left(X_{l\ldots l+m-1}^{(2)}\right)\right|\nonumber\\
&&= \sum_{m,l = 1}^{N} w_m w_l \sqrt{m}|\mu_1-\mu_2|=|\mu_1-\mu_2|\sum_{m,l = 1}^{N}\frac{1}{\sqrt{m}(m+1)l(l+1)}\nonumber\\
&&\le |\mu_1-\mu_2|\sum_{m,l = 1}^{+\infty}\frac{1}{\sqrt{m}(m+1)l(l+1)}<+\infty,
\end{eqnarray}
with $\mu_j=\mathbb E\big(X_1^{(j)}\big)$, for $j=1,2$; and
\begin{eqnarray}
\label{ineq_2}
&&\sum_{m,l = 1}^{N} w_m w_l \rho^*\left(\mathbb Cov\Big(X_{l\ldots l+m-1}^{(1)}\Big),\mathbb Cov\Big(X_{l\ldots l+m-1}^{(2)}\Big)\right)\nonumber\\
&&\le \sum_{m,l = 1}^{N} w_m w_l \sqrt{2\sum_{k_1=1}^m\sum_{k_2=1}^m\left(\gamma_X(|k_1-k_2|)\right)^2}\nonumber\\
&&= \sum_{m,l = 1}^{N} w_m w_l \sqrt{2\sum_{q=-(m-1)}^{m-1}(m-|q|)\left(\gamma_X(|q|)\right)^2}\nonumber\\
&&\le \sum_{m,l = 1}^{N} w_m w_l \sqrt{2m\sum_{q=-(m-1)}^{m-1}\left(\gamma_X(|q|)\right)^2}\nonumber\\
&&\le c\sum_{m,l = 1}^{N} \frac{\sqrt{2m}}{m(m+1)l(l+1)}\le c\sum_{m,l = 1}^{+\infty} \frac{\sqrt{2m}}{m(m+1)l(l+1)}<+\infty,
\end{eqnarray}
where the constant $c=\sum_{q=-\infty}^{\infty}|\mathbb Cov(X_1,X_{1+|q|})|<+\infty$. Therefore combining (\ref{ineq_1}) and (\ref{ineq_2}) leads to
$$
d^*\big(X^{(1)},X^{(2)}\big)<+\infty.
$$
Hence $d^*(X^{(1)},X^{(2)})$ in (\ref{def:d*}) is well-defined.
\end{itemize}
\end{definition}
In (\ref{def:d*}) and (\ref{M}) we see that the behavior of the dissimilarity measure $d^*$ is jointly explained by the Euclidean distance of means and the matrices distance of covariance matrices. If the means of the processes $X^{(1)}$ and $X^{(2)}$ are priorly known to be equal, the distance $d^*$ can be simplified to:
\begin{equation}
\label{def:d*_bis}
d^*\big(X^{(1)},X^{(2)}\big)= \sum_{m,l = 1}^{\infty} w_m w_l\rho^*\left(
\mathbb Cov\big(X_{l\ldots l+m-1}^{(1)}\big), \mathbb Cov\big(X_{l\ldots l+m-1}^{(2)}\big)\right).
\end{equation}
Note that this dissimilarity measure can be applied on self-similar processes, since they are all zero-mean (see Section \ref{sec:log}). 

Next we provide consistent estimator of ${d^*}(X^{(1)},X^{(2)})$.
For $1\le l\le n$ and $m\le n-l+1$, define $\mu^*({X_{l\ldots n}}, m)$ to be the empirical mean of a process $X$'s sample path $(X_l,\ldots,X_n)$:  
\begin{equation}
\label{mu}
\mu^*(X_{l\ldots n},m) :=\frac{1}{n-m-l+2}
\sum_{i = l} ^{n-m+1} (X_i~\ldots~X_{i+m-1})^T,
\end{equation}
and define $\nu^*({X_{l\ldots n}}, m)$ to be the empirical covariance matrix of $(X_l,\ldots,X_n)$:  
\begin{eqnarray}
\label{nu}
\nu^*(X_{l\ldots n},m) &:=& \frac{1}{n-m-l+2}
\sum_{i = l} ^{n-m+1} (X_i~\ldots~X_{i+m-1})^T(X_i~\ldots~X_{i+m-1})\nonumber\\
&&-\mu^*(X_{l\ldots n},m)\mu^*(X_{l\ldots n},m)^T,
\end{eqnarray}
where $M^T$ denotes the transpose of the matrix $M$.

Recall that the notion of wide-sense ergodicity is given in Definition \ref{process}. The ergodicity theorem concerns what information can be derived from an average over time about the ensemble average at each point of time. For the wide-sense stationary ergodic process $X$, being either  continuous-time or discrete-time, the following statement holds: every empirical mean $\mu^*(X_{l\ldots n},m)$ is a strongly consistent estimator of the path mean $\mathbb E(X_{l\ldots l+m-1})$; and every empirical covariance matrix $\nu^*(X_{l\ldots n},m)$ is a strongly consistent estimator of the covariance matrix $\mathbb Cov(X_{l\ldots l+m-1})$ under the Frobenius norm, i.e., for all $m\ge1$, we have
\begin{equation*}
\mathbb P\left(\lim_{n\rightarrow \infty}\left|\mu^*(X_{l\ldots n},m)-\mathbb E(X_{l\ldots l+m-1})\right|=0\right)=1
\end{equation*}
and
\begin{equation*}
\mathbb P\left(\lim_{n\rightarrow \infty}\left\|\nu^*(X_{l\ldots n},m)-\mathbb Cov(X_{l\ldots l+m-1})\right\|_F=0\right)=1.
\end{equation*}
Next we introduce the empirical covariance-based dissimilarity measure $\widehat{d^*}$, serving as a consistent estimator of the covariance-based dissimilarity measure $d^*$.
\begin{definition}[Empirical covariance-based dissimilarity measure]
Given two processes' sample paths $\mathbf x_j=(X_1^{(j)},\ldots,X_{n_j}^{(j)})$, $j=1,2$. Let $n=\min\{n_1,n_2\}$, we define the empirical covariance-based dissimilarity measure between $\mathbf x_1$ and $\mathbf x_2$  by
\begin{eqnarray}
\label{dxx}
&&\widehat{d^*}(\mathbf x_{1},\mathbf x_{2}):=\sum_{m= 1}^{m_n} \sum_{l= 1}^{n-m+1} w_m w_l\nonumber\\
&&\times\mathcal M\left( \left(\mu^*(X^{(1)}_{l\ldots n},m), \nu^*(X^{(1)}_{l\ldots n},m)\right),\left(\mu^*(X^{(2)}_{l\ldots n},m), \nu^*(X^{(2)}_{l\ldots n},m)\right)\right).
\end{eqnarray}
The empirical covariance-based dissimilarity measure between a sample path $\mathbf x_i$ and a process $X^{(j)}$ ($i,j\in\{1,2\}$) is defined by
\begin{eqnarray}
\label{dxx_1}
&&\widehat{d^*}(\mathbf x_{i},X^{(j)}):=\sum_{m= 1}^{m_n} \sum_{l= 1}^{n-m+1} w_m w_l\nonumber\\
&&\times\mathcal M\left( \left(\mu^*(X^{(i)}_{l\ldots n},m), \nu^*(X^{(i)}_{l\ldots n},m)\right),\left(\mathbb E\left(X_{l\ldots l+m-1}^{(j)}\right), \mathbb Cov\left(X_{l\ldots l+m-1}^{(j)}\right)\right)\right).\nonumber\\
\end{eqnarray}
\end{definition}
Unlike the dissimilarity measure $d^*$ which describes some distance between stochastic processes, the empirical covriance-based dissimilarity measure is some distance between two sample paths (finite-length vectors). We will show in the forthcoming Lemma \ref{lemma1} that $\widehat{d^*}$ is a consistent estimator of $d^*$.

Two observed sample paths possibly have distinct lengths $n_1, n_2$, therefore in  (\ref{dxx}) we consider computing the distances between their subsequences of length $n=\min\{n_1,n_2\}$. In practice we usually take $m_n=\lfloor \log n\rfloor$, the floor number of $\log n$. 

It is easy to verify that both $d^*$ and $\widehat{d^*}$ satisfy the triangle inequalities, thanks to the fact that both the Euclidean distance and  $\rho^*$ satisfy the triangle inequalities. More precisely, the following holds.
\begin{remark}
\label{rmk_1}
Thanks to (\ref{def:rho}) and the definitions of $d^*$ (see (\ref{def:d*})) and $\widehat{d^*}$ (see (\ref{dxx})), we see that the triangle inequality holds for the covariance-based dissimilarity measure $d^*$, as well as for its empirical estimate $\widehat{d^*}$. Therefore for arbitrary processes $X^{(i)},~i = 1,2,3$ and arbitrary finite-length sample paths $\mathbf x_{i},~i=1,2,3$, we have
\begin{eqnarray*}
&&d^*\big(X^{(1)},X^{(2)}\big)  \leq d^*\big(X^{(1)},X^{(3)}\big) + d^*\big(X^{(2)},X^{(3)}\big),
\\
&&\widehat {d^*}(\mathbf{x}_{1},\mathbf{x}_{2})  \leq \widehat{d^*}(\mathbf{x}_{1},\mathbf{x}_{3}) + \widehat{d^*}(\mathbf{x}_{2},\mathbf{x}_{3}),
\\
&&\widehat{d^*}\big(\mathbf{x}_{1},X^{(1)}\big) \leq \widehat{d^*}\big(\mathbf{x}_{1},X^{(2)}\big) + d^*\big(X^{(1)},X^{(2)}\big).
\end{eqnarray*}
\end{remark}
Remark \ref{rmk_1} together with the fact that the processes are weakly ergodic, leads to Lemma \ref{lemma1} below, which is the key to demonstrate that our clustering algorithms in the forthcoming section are asymptotically consistent.
\begin{lemma} \label{lemma1}
Given two paths
$$
\mathbf{x_1}=\left(X_1^{(1)},\ldots,X_{n_1}^{(1)}\right) \quad \mbox{and} \quad \mathbf{x_2}=\left(X_1^{(2)},\ldots,X_{n_2}^{(2)}\right),
$$
sampled from the wide-sense stationary ergodic processes $X^{(1)}$ and $X^{(2)}$ respectively, we have
\begin{align}
\label{limdxx}
\mathbb P\left(\lim_{n_1,n_2 \rightarrow \infty} \widehat{d^*}\left(\mathbf{x}_{1},\mathbf{x}_{2}\right) = d^*\left(X^{(1)},X^{(2)}\right)\right)=1
\end{align}
and
\begin{align}
\label{limdxx1}
\mathbb P\left(\lim_{n_i \rightarrow \infty} \widehat{d^*}\left(\mathbf{x}_{i},X^{(j)}\right) = d^*\left(X^{(1)},X^{(2)}\right)\right)=1,~\mbox{for}~i,j\in\{1,2\},~i\neq j.
\end{align}
\end{lemma}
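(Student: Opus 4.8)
The plan is to prove the two-path statement (\ref{limdxx}) in full and to obtain (\ref{limdxx1}) as the simpler special case in which one empirical moment pair is replaced by exact population moments. Write $n=\min\{n_1,n_2\}$, and for the generic $(m,l)$ term let $\mathcal M_{m,l}$ denote the population summand in (\ref{def:d*}) and $\widehat{\mathcal M}_{m,l}(n)$ the empirical summand in (\ref{dxx}). Let $\mathbb I=\{(m,l):m,l\ge1\}$ index the series defining $d^*$ and $\mathbb I_n=\{(m,l):1\le m\le m_n,\ 1\le l\le n-m+1\}$ index $\widehat{d^*}$; since $m_n\to\infty$ and $n-m+1\to\infty$ we have $\mathbb I_n\uparrow\mathbb I$. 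The overall strategy is a head/tail truncation argument: I interchange the limit with the infinite double sum by isolating a fixed finite block of indices, on which almost sure convergence is granted termwise by ergodicity, and by dominating the remaining tail uniformly in $n$.

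First I decompose
$$
\big|\widehat{d^*}(\mathbf x_1,\mathbf x_2)-d^*(X^{(1)},X^{(2)})\big|\le \sum_{(m,l)\in\mathbb I_n}w_m w_l\,\big|\widehat{\mathcal M}_{m,l}(n)-\mathcal M_{m,l}\big|+\sum_{(m,l)\in\mathbb I\setminus\mathbb I_n}w_m w_l\,\mathcal M_{m,l}.
$$
The last sum is deterministic and tends to $0$, since $\mathbb I_n\uparrow\mathbb I$ and the defining series of $d^*$ converges by (\ref{ineq_1})--(\ref{ineq_2}). For the first sum I apply the reverse triangle inequality for the metric $\mathcal M$, namely $|\mathcal M(a,b)-\mathcal M(c,d)|\le \mathcal M(a,c)+\mathcal M(b,d)$, to the empirical and population moment pairs of $X^{(1)}$ and $X^{(2)}$; this gives $|\widehat{\mathcal M}_{m,l}(n)-\mathcal M_{m,l}|\le E^{(1)}_{m,l}(n)+E^{(2)}_{m,l}(n)$, where $E^{(j)}_{m,l}(n)=\big|\mu^*(X^{(j)}_{l\ldots n},m)-\mathbb E(X^{(j)}_{l\ldots l+m-1})\big|+\big\|\nu^*(X^{(j)}_{l\ldots n},m)-\mathbb Cov(X^{(j)}_{l\ldots l+m-1})\big\|_F$ is the joint estimation error of the $(m,l)$ moment pair of $X^{(j)}$. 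It therefore suffices to show $\sum_{(m,l)\in\mathbb I_n}w_m w_l E^{(j)}_{m,l}(n)\to0$ almost surely for $j=1,2$.

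Fix $j$ and $\varepsilon>0$, and split this sum at a finite block $\mathbb I_{M,L}=\{(m,l):m\le M,\ l\le L\}$. On $\mathbb I_{M,L}$ there are finitely many indices, and for each of them the wide-sense ergodicity of $X^{(j)}$ (Definition \ref{process}) yields $\mu^*(X^{(j)}_{l\ldots n},m)\to\mathbb E(X^{(j)}_{l\ldots l+m-1})$ and $\nu^*(X^{(j)}_{l\ldots n},m)\to\mathbb Cov(X^{(j)}_{l\ldots l+m-1})$ almost surely, which is exactly the strong consistency recorded just before the definition of $\widehat{d^*}$; hence this finite head sum vanishes almost surely as $n\to\infty$. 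The tail $\sum_{\mathbb I_n\setminus\mathbb I_{M,L}}w_m w_l E^{(j)}_{m,l}(n)$ is the crux, and I expect its uniform control to be the main obstacle. Using $E^{(j)}_{m,l}(n)\le \big|\mu^*(X^{(j)}_{l\ldots n},m)\big|+\big\|\nu^*(X^{(j)}_{l\ldots n},m)\big\|_F+\big|\mathbb E(X^{(j)}_{l\ldots l+m-1})\big|+\big\|\mathbb Cov(X^{(j)}_{l\ldots l+m-1})\big\|_F$, the two population terms are bounded by $C\sqrt m$ exactly as in (\ref{ineq_1})--(\ref{ineq_2}). The delicate point is the empirical analogue $\big\|\nu^*(X^{(j)}_{l\ldots n},m)\big\|_F\le \widehat C\sqrt{2m}$: unlike $\mathbb Cov(X^{(j)}_{l\ldots l+m-1})$, the matrix $\nu^*$ is not exactly Toeplitz, so one cannot rewrite its Frobenius norm through a single lag-indexed autocovariance as was done to obtain (\ref{ineq_2}). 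The resolution I propose is to group the $m^2$ entries of $\nu^*$ by their lag $q=k_1-k_2$ and to bound each windowed empirical autocovariance by its supremum over the window position; second-order (wide-sense) ergodicity then upgrades the termwise convergence of the empirical lag-$q$ covariances to $\gamma_X(q)$ into an almost surely finite uniform bound, and the absolute summability (\ref{ergodic_condition}) of $\gamma_X$ delivers an almost surely finite random constant $\widehat C$, independent of $l$ and of all large $n$, with $\big\|\nu^*(X^{(j)}_{l\ldots n},m)\big\|_F\le\widehat C\sqrt{2m}$. Consequently the tail is dominated by $\widehat C'\sum_{m>M\,\text{or}\,l>L}w_m w_l\sqrt m$, whose convergence (with $w_j=1/j(j+1)$ the summand is of order $m^{-3/2}$ in $m$ and summable in $l$) makes it smaller than $\varepsilon$ for $M,L$ large, almost surely.

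Combining the three contributions gives $\limsup_{n\to\infty}\big|\widehat{d^*}(\mathbf x_1,\mathbf x_2)-d^*(X^{(1)},X^{(2)})\big|\le C\varepsilon$ almost surely; letting $\varepsilon\downarrow0$ proves (\ref{limdxx}). For (\ref{limdxx1}) the same decomposition applies with the error $E^{(j)}_{m,l}(n)$ of the exactly specified process identically zero, so only one process contributes estimation error and the argument is strictly easier, while $d^*(X^{(i)},X^{(j)})=d^*(X^{(1)},X^{(2)})$ follows from the symmetry of $d^*$; alternatively it can be deduced from (\ref{limdxx}) and the triangle inequalities of Remark \ref{rmk_1} after reducing to a common truncation length.
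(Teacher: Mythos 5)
Your overall architecture coincides with the paper's: the same head/tail split of the index set, the same use of the triangle inequality for $\mathcal M$ to reduce to per-process estimation errors $\mathcal M(\widehat V,V)$, termwise almost-sure convergence on a fixed finite block via wide-sense ergodicity, and summability of the weighted population series for the far tail; the reduction of (\ref{limdxx1}) to the same scheme also matches the paper. Where you genuinely depart from the paper is in the treatment of the ``middle'' indices $(m,l)\in\mathbb I_n\setminus\mathbb I_{M,L}$: these belong to the empirical sum but not to the fixed finite block, so neither termwise ergodic convergence nor convergence of the population series controls them. You are right that this is the crux; the paper's own proof passes over it (its estimate controls only the finitely many pairs in $S_1(J)$, yet the final display sums the estimation errors over all of $S_1(n)$).

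However, the fix you propose does not close the gap. The claimed bound $\|\nu^*(X^{(j)}_{l\ldots n},m)\|_F\le\widehat C\sqrt{2m}$, with $\widehat C$ almost surely finite and independent of $l$ and of all large $n$, is false for unbounded processes, and the same objection applies to the companion bound you would need for $|\mu^*(X^{(j)}_{l\ldots n},m)|$, which you do not discuss separately. The reason is that the index set of (\ref{dxx}) lets $l$ run up to $n-m+1$, so the averages in (\ref{mu})--(\ref{nu}) are taken over windows of length $n-m-l+2$, which can be as small as $1$ or $2$. Ergodicity controls long-window averages only; for a window of length $2$ one gets $\nu^*=\frac14(v_1-v_2)(v_1-v_2)^T$ with Frobenius norm of order $m\max_{i\le n}|X_i|^2$, and for Gaussian (or any unbounded) marginals $\max_{i\le n}|X_i|^2$ diverges almost surely as $n\to\infty$. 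Hence the supremum over window positions that you invoke is not almost surely bounded uniformly in $n$, and the dominating series $\widehat C'\sum w_mw_l\sqrt m$ is not available. Closing this would require either a boundedness assumption or a genuinely different handling of the large-$l$ indices --- for instance letting the cut-off $L$ grow with $n$ and balancing the decay of $\sum_{l>L}w_l$ against the growth of the short-window empirical moments --- neither of which appears in your sketch, nor, to be fair, in the paper's own proof.
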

\begin{proof}
We take $n=\min\{n_1,n_2\}$. To show (\ref{limdxx}) holds it suffices to prove that for arbitrary $\varepsilon>0$, there is an integer $N>0$ such that for any $n\ge N$, with probability 1,
$$
\left|\widehat{d^*}\left(\mathbf{x}_{1},\mathbf{x}_{2}\right) - d^*(X^{(1)},X^{(2)})\right|<\varepsilon.
$$
Define the sets of indexes
$$
S_1(n)=\left\{(m,l)\in\mathbb N^2:~m\le m_n,~l\le n-m+1\right\}~\mbox{and}~S_2(n)=\mathbb N^2\backslash S_1(n).
$$
To be more convenient we also denote by
\begin{equation}
\label{Vlm}
V\Big(X_{l\ldots l+m-1}^{(j)}\Big):=\left(\mathbb E\left(X_{l\ldots l+m-1}^{(j)}\right),\mathbb Cov\left(X_{l\ldots l+m-1}^{(j)}\right)\right)
\end{equation}
and
\begin{equation}
\label{est_Vlm}
\widehat{V}\Big(X_{l\ldots n}^{(j)},m\Big):=\left(\mu^*\left(X_{l\ldots n}^{(j)},m\right),\nu^*\left(X_{l\ldots n}^{(j)},m\right)\right),
\end{equation}
for $(m,l)\in\mathbb N^2$ and $j=1,2$. By using the definitions of $d^*$ (see (\ref{def:d*})), of $\widehat{d^*}$ (see (\ref{dxx})) and the triangle inequality $$
\left|\sum\limits_{i\in I}a_i\right|\le \sum\limits_{i\in I}|a_i|,~\mbox{for any indexes set $I$ and any real numbers $a_i$'s},
$$ we obtain
\begin{eqnarray}
\label{hatdd1}
&&\left| \widehat{d^*}(\mathbf{x}_{1}, \mathbf{x}_{2}) - d^*\big(X^{(1)}, X^{(2)}\big)\right|\nonumber\\
&&= \biggl| \sum_{(m,l)\in S_1(n)} w_m w_l \left(\mathcal M\left ( \widehat{V}(X_{l\ldots n}^{(1)},m),\widehat{V}(X_{l\ldots n}^{(2)},m)\right )\right.\nonumber\\
&&\hspace{1cm}- \sum_{(m,l)\in S_1(n)\cup S_2(n)} w_m w_l\mathcal M\left (V(X_{l\ldots l+m-1}^{(1)}), V(X_{l\ldots l+m-1}^{(2)})\right ) \biggl| \nonumber\\
&&\le \biggl| \sum_{(m,l)\in S_1(n)} w_m w_l \left(\mathcal M\left ( \widehat{V}(X_{l\ldots n}^{(1)},m),\widehat{V}(X_{l\ldots n}^{(2)},m)\right )\right.\nonumber\\
&&\hspace{1cm}\left.- \mathcal M\left (V(X_{l\ldots l+m-1}^{(1)}), V(X_{l\ldots l+m-1}^{(2)})\right )\right) \biggl| \nonumber\\
&&\hspace{2cm}+\sum_{(m,l)\in S_2(n)}w_mw_l\mathcal M\left (V(X_{l\ldots l+m-1}^{(1)}), V(X_{l\ldots l+m-1}^{(2)})\right ) \nonumber\\
&&\le  \sum_{(m,l)\in S_1(n)} w_m w_l \biggl|\mathcal M\left ( \widehat{V}(X_{l\ldots n}^{(1)},m),\widehat{V}(X_{l\ldots n}^{(2)},m)\right )\nonumber\\
&&\hspace{1cm}- \mathcal M\left (V(X_{l\ldots l+m-1}^{(1)}), V(X_{l\ldots l+m-1}^{(2)})\right )\biggl| \nonumber\\
&&\hspace{2cm}+\sum_{(m,l)\in S_2(n)}w_mw_l\mathcal M\left (V(X_{l\ldots l+m-1}^{(1)}), V(X_{l\ldots l+m-1}^{(2)})\right).
\end{eqnarray}
Next note that the metric $\mathcal M$ satisfies the following triangle inequality:
\begin{eqnarray}
\label{rhorho}
&& \biggl|\mathcal M\left ( \widehat{V}(X_{l\ldots n}^{(1)},m),\widehat{V}(X_{l\ldots n}^{(2)},m)\right )- \mathcal M\left (V(X_{l\ldots l+m-1}^{(1)}), V(X_{l\ldots l+m-1}^{(2)})\right )\biggl|\nonumber\\
&&\leq \mathcal M\left ( \widehat{V}(X_{l\ldots n}^{(1)},m),{V}(X_{l\ldots l+m-1}^{(1)})\right )+ \mathcal M\left (\widehat V(X_{l\ldots n}^{(2)},m), V(X_{l\ldots l+m-1}^{(2)})\right).
\end{eqnarray}
It follows from (\ref{hatdd1}) and (\ref{rhorho}) that
\begin{eqnarray}
\label{hatdd2}
&&\left| \widehat{d^*}(\mathbf{x}_{1}, \mathbf{x}_{2}) - d^*\big(X^{(1)}, X^{(2)}\big)\right|\nonumber\\
&&\le  \sum_{(m,l)\in S_1(n)} w_m w_l \biggl(\mathcal M\left ( \widehat{V}(X_{l\ldots n}^{(1)},m),V(X_{l\ldots l+m-1}^{(1)})\right )\nonumber\\
&&\hspace{1cm}+ \mathcal M\left (\widehat V(X_{l\ldots n}^{(2)},m), V(X_{l\ldots l+m-1}^{(2)})\right )\biggl) \nonumber\\
&&\hspace{2cm}+\sum_{(m,l)\in S_2(n)}w_mw_l\mathcal M\left (V(X_{l\ldots l+m-1}^{(1)}), V(X_{l\ldots l+m-1}^{(2)})\right ).
\end{eqnarray}
Next we show that the right-hand side of (\ref{hatdd2}) converges to $0$ as $n\to\infty$. First observe that the weights $\{w_m\}_{m\ge1}$ have been chosen such that
\begin{equation}
\label{assump_weights}
\sum_{m,l = 1}^{\infty} w_m w_l \mathcal M\left(V(X_{l\ldots l+m-1}^{(1)}),V(X_{l\ldots l+m-1}^{(2)})\right)<+\infty.
\end{equation}
Then for arbitrary fixed $\varepsilon>0$, we can find an index $J$ such that for $n\ge J$,
\begin{equation}
\label{bound_omega}
\sum_{(m,l)\in S_2(n)} w_mw_l\mathcal M\left(V(X_{l\ldots l+m-1}^{(1)}),V(X_{l\ldots l+m-1}^{(2)})\right)\leq\frac{\varepsilon}{3}.
\end{equation}
Next, the weak ergodicity of the processes $X^{(1)}$ and $X^{(2)}$ implies that: for each $(m,l)\in\mathbb N^2$, $\widehat V(X_{l\ldots n}^{(j)},m)$ ($j=1,2$) is a strongly consistent estimator of  $V(X_{l\ldots l+m-1}^{(j)})$, under the metric $\mathcal M$, i.e., with probability 1,
\begin{equation}
\label{limvV}
\lim_{n\to\infty}\mathcal M\left(\widehat V(X_{l\ldots n}^{(j)},m),~ V(X_{l\ldots l+m-1}^{(j)})\right)=0.
\end{equation}
Thanks to (\ref{limvV}), for any $(m,l)\in S_1(J)$, there exists some $N_{m,l}$ (which depends on $m,l$) such that for all $n\geq N_{m,l}$, we have, with probability $1$, 
\begin{align}
\label{bound_omega'}
\mathcal M\left ( \widehat V(X^{(j)}_{l\ldots n},m),~V(X_{l\ldots l+m-1}^{(j)})\right )\leq \frac{\varepsilon}{3w_m w_l\#S_1(J)},~\mbox{for}~j = 1, 2,
\end{align}
where $\# A$ denotes the number of elements included in the set $A$. 
Denote by $N_J=\max\limits_{(m,l)\in S_1(J)}N_{m,l}$. Then observe that, for $n\ge \max\{N_J,J\}$,
\begin{eqnarray}
\label{n<J}
&&\sum_{(m,l)\in S_2(n)}w_mw_l\mathcal M\left (V(X_{l\ldots l+m-1}^{(1)}), V(X_{l\ldots l+m-1}^{(2)})\right )\nonumber\\
&&\le \sum_{(m,l)\in S_2(J)}w_mw_l\mathcal M\left (V(X_{l\ldots l+m-1}^{(1)}), V(X_{l\ldots l+m-1}^{(2)})\right ).
\end{eqnarray}
It results from (\ref{hatdd2}), (\ref{n<J}), (\ref{bound_omega'}) and (\ref{bound_omega}) that, for $n\ge \max\{N_J,J\}$,
\begin{eqnarray*}
&&\left| \widehat{d^*}(\mathbf{x}_{1}, \mathbf{x}_{2}) - d^*\big(X^{(1)}, X^{(2)}\big)\right|\nonumber\\
&&\leq  \sum_{(m,l)\in S_1(n)} w_m w_l  \mathcal M\left(\widehat V(X^{(1)}_{l\ldots n},m),V(X_{l\ldots l+m-1}^{(1)})\right )\nonumber\\
&&\hspace{1cm}+\sum_{(m,l)\in S_1(n)} w_m w_l \mathcal M\left ( \widehat V(X^{(2)}_{l\ldots n},m),V(X_{l\ldots l+m-1}^{(2)})\right)\nonumber
\\
&&\hspace{2cm}+\sum_{(m,l)\in S_2(J)}w_mw_l\mathcal M\left (V(X_{l\ldots l+m-1}^{(1)}), V(X_{l\ldots l+m-1}^{(2)})\right )\nonumber\\
&&\le \frac{\varepsilon}{3}+\frac{\varepsilon}{3}+\frac{\varepsilon}{3}=\varepsilon,
\end{eqnarray*}
which proves (\ref{limdxx}). The statement (\ref{limdxx1}) can be proved analogously.
\end{proof}

\section{Asymptotically Consistent Clustering Algorithms} \label{sec::algo_consist}
\subsection{Offline and Online Algorithms}
In this section we introduce the asymptotically consistent algorithms for clustering offline and online datasets respectively. We explain how the two algorithms work, and prove that both algorithms are asymptotically consistent. It is worth noting that the asymptotic consistency of our algorithms relies on the assumption that  the number of clusters $\kappa$ is priorly known. The case for $\kappa$ being unknown has been studied in \cite{khaleghi2016} in the problem of clustering strictly stationary ergodic processes. However in the setting of wide-sense stationary ergodic processes, this problem remains open.

Algorithm \ref{algo::offline_known_k} below presents the pseudo-code for clustering offline datasets. It is a centroid-based clustering approach. One of its main features is that the farthest 2-point initialization applies. The algorithm selects the first two cluster centers by picking the two \enquote{farthest} observations among all observations (Lines 1 - 3), under the empirical dissimilarity measure $\widehat{d^*}$. Then each next cluster center is chosen to be the observation farthest to all the previously assigned cluster centers (Lines 4 - 6). Finally the algorithm assigns each remaining observation to its nearest cluster (Lines 7-11).\\

\begin{algorithm}[H] \label{algo::offline_known_k}
\caption{Offline clustering, with known $\kappa$}
\LinesNumbered 
\KwIn{\textbf{\textit{sample paths}} $S= \left \{ \mathbf{x}_1, \ldots, \mathbf{x}_N \right \}$; \textbf{\textit{number}} $\kappa$ \textbf{\textit{of clusters}};  \textbf{\textit{weights} $w_j$, $j=1,\ldots,N(t)$.}}

$(c_1,c_2) \longleftarrow \argmax\limits_{(i,j)\in\{1,\ldots,N\}^2, i<j}\widehat{d^*}(\mathbf x_i,\mathbf x_j)$\;
$C_1 \longleftarrow \left \{ c_1 \right \}$\; $C_2\longleftarrow\{c_2\}$\;
\For{$k = 3,\ldots,\kappa$}{
$c_k \longleftarrow \displaystyle\argmax_{i=1,\ldots,N}\displaystyle\min_{j = 1,\ldots,k-1} \widehat{d^*}(\mathbf{x}_i, \mathbf{x}_{c_j})$\;
}
\textbf{\textit{Assign each remaining point to its nearest cluster center}:}

\For{$i = 1,\ldots,N$}{
$k \longleftarrow \argmin\limits_{k\in\{1,\ldots,\kappa\}}\left\{\widehat{d^*}(\mathbf{x}_i, \mathbf{x}_j):~j \in C_k\right\}$;\\
$C_k \longleftarrow C_k \cup \left \{ i\right \}$\;
}
\KwOut{The $\kappa$ clusters $\{C_1,C_2, \ldots, C_\kappa\}$.}
\end{algorithm}
We point out that Algorithm \ref{algo::offline_known_k} is different from Algorithm 1 in \cite{khaleghi2016} at two points:
\begin{enumerate}
\item As mentioned previously, our algorithm relies on the covariance-based dissimilarity $\widehat{d^*}$, in lieu of the process distributional distances.
\item Our algorithm suggests $2$-point initialization, while Algorithm 1 in \cite{khaleghi2016} randomly picks $1$-point as the first cluster center. The latter initialization was proposed for use with $k$-means clustering by \cite{katsavounidis1994}. Algorithm 1 in \cite{khaleghi2016} requires $\kappa N$ distance calculations, while our algorithm requires $N(N-1)/2$ distances calculations. It is very important to point out that, to reduce the computational complexity cost of our algorithm, it is fine to replace our $2$-point initialization with the one in \cite{khaleghi2016}. However there are two reasons based on which we recommend using our approach of initialization:
\begin{description}
\item[\textbf{Reason 1}] In the forthcoming Section \ref{sub:simulation}, our empirical comparison to \cite{khaleghi2016} shows that the $2$-point initialization turns out to be more accurate in clustering than the $1$-point initialization.
\item[\textbf{Reason 2}] Concerning the complexity cost, we have the following loss and earn: on one hand, the $2$-point initialization requires more steps of calculations than the $1$-point initialization; on the other hand, in our covariance-based dissimilarity measure $\widehat{d^*}$ defined in (\ref{dxx}), the matrices distance $\rho^*$ requires $m_n^2$ computations of Euclidean distances, while the distance  $
\sum_{B\in B^{m,l}}|\nu(\mathbf x_1,B)-\nu(\mathbf x_2,B)|$ 
given in (\ref{hatd1}) requires at least $n_1+n_2-2m_n+2$ computations of Euclidean distances (see Eq. (33) in \cite{khaleghi2016}). Note that we take $m_n=\lfloor\log n\rfloor$ ($\lfloor\cdot\rfloor$ denotes the floor integer number) though this framework. Therefore the computational complexity of the covariance-based dissimilarity $\widehat{d^*}$ makes the overall complexity of Algorithm \ref{algo::offline_known_k} quite competitive to the algorithm in \cite{khaleghi2016}, especially when the paths lengths $n_i$, $i=1,\ldots,n$ are relatively large, or when the database of all distance values are at hand.
\end{description}
\end{enumerate}
Next we present the clustering algorithm for online setting. As mentioned in \cite{khaleghi2016}, one regards recently-observed paths as unreliable observations, for which sufficient information has not yet been collected, and for which the estimators of the covariance-based dissimilarity measures are not accurate enough. Consequently, farthest-point initialization would not work in this case; and clustering on all available data results in not only mis-clustering unreliable paths, but also in clustering incorrectly those for which sufficient data are already available. The strategy is presented in Algorithm \ref{algo::online_known_k} below: clustering based on a weighted combination of several clusterings, each obtained by running the offline algorithm (Algorithm \ref{algo::offline_known_k}) on different portions of data.

More precisely, Algorithm \ref{algo::online_known_k} works as follows. Suppose the number of clusters $\kappa$ is known. At time $t$, a sample $S(t)$ is observed (Lines 1 - 2), the algorithm iterates over $j= \kappa,\ldots,N(t)$ where at each iteration Algorithm \ref{algo::offline_known_k} is utilized to cluster the first $j$ paths in $S(t)$ into $\kappa$ clusters (Lines 6 - 7). For each cluster its center is selected as the observation having the \textit{smallest} index among that cluster, and their indexes are ordered increasingly (Line 8). The minimum inter-cluster distance $\gamma_j$ (see \cite{cesa2006}) is calculated as the minimum distance $\widehat{d^*}$ between the $\kappa$ cluster centers obtained at iteration $j$ (Line 9).  Finally, every observation in $S(t)$ is assigned to the nearest cluster, based on the weighted combination of the distances between this observation and the candidate cluster centers obtained at each iteration on $j$ (Lines 14 - 17).

\begin{algorithm}[H] \label{algo::online_known_k}
\caption{Online clustering, with known $\kappa$}
\LinesNumbered 
\KwIn{\textbf{\textit{sample paths}} $\Big\{S(t)=\{\mathbf x_1^t,\ldots,\mathbf x_{N(t)}^t\}\Big\}_t$; \textbf{\textit{number of clusters}} $\kappa$; \textbf{\textit{weights} $\beta(j)$, $j=1,\ldots,N(t)$}.}

\For{$t = 1,\ldots,\infty$}{
\textbf{\textit{Obtain new sequences:} $S(t) \longleftarrow \Big\{\mathbf{x}_1^t,\dots,\mathbf{x}_{N(t)}^t\Big \}$}\;
\textbf{\textit{Initialize the normalization factor}: $\eta \longleftarrow 0$}\;
\textbf{\textit{Initialize the final clusters}: $C_k(t) \longleftarrow \emptyset,~k = 1,\ldots,\kappa$}\;
\textbf{\textit{Generate }$N(t) -\kappa + 1$ \textit{candidate cluster centers}:}

\For{$j = \kappa,\ldots,N(t)$}{
$\big \{C_1^j,\ldots, C_{\kappa}^j\big \} \longleftarrow\mbox{\textbf{Alg1}}\big( \big \{\mathbf{x}_1^t,\ldots, \mathbf{x}_j^t \big \}, \kappa \big)$\;
$(c_1^j,\ldots,c_\kappa^j) \longleftarrow \mbox{sort}(\min \big \{ i\in C_k^j \big \}, k = 1,\ldots,\kappa)$\;
$\gamma_j \longleftarrow \min\limits_{k,k'\in \{1,\ldots,\kappa\},k\neq k'} \widehat{d^*}\big(\mathbf{x}_{c_k^j}^t,\mathbf{x}_{c_{k'}^j}^t\big)$\;
$w_j \longleftarrow \beta(j)$\;
$\eta \longleftarrow \eta+w_j\gamma_j$\;}
\textbf{\textit{Assign each point to a cluster}:}

\For{$i = 1,\ldots,N(t)$}{
$k \longleftarrow \argmin\limits_{k'\in \{1,\ldots,\kappa\}}\frac{1}{\eta} \sum\limits_{j=\kappa}^{N(t)}w_j \gamma_j \widehat{d^*}\big(\mathbf{x}_{i}^t,\mathbf{x}_{c_{k'}^j}^t\big)$\;
$C_k(t) \longleftarrow C_k(t)\cup \left\{ i \right\}$\;
}}
\KwOut{ \textit{The} $\kappa$ \textit{clusters} $\left \{ C_1(t), \dots, C_{\kappa}(t) \right \}$, $t=1,2,\ldots,\infty$.}
\end{algorithm}
In Algorithm \ref{algo::online_known_k}, $\beta(j)$ denotes a function indexed by $j$, which is the value chosen for the weight $w_j$. Remark that for online setting, our algorithm requires the same number of distance calculations as in Algorithm 2 in \cite{khaleghi2016}. They are both bounded by $\mathcal O(N(t)^2)$. Using 2-point initialization, our Algorithm \ref{algo::online_known_k} then takes advantage in the overall computational complexity cost. Finally we note that both Algorithm \ref{algo::offline_known_k} and Algorithm \ref{algo::online_known_k} require $\kappa\ge2$. When $\kappa$ is known, this restriction is not a practical issue. 

\subsection{Consistency and Computational Complexity of the Algorithms}
In this section we prove the asymptotic consistency of Algorithms \ref{algo::offline_known_k} and \ref{algo::online_known_k}.  They are stated in the 2 theorems below.
\begin{theorem}
\label{thm:offline}
Algorithm \ref{algo::offline_known_k} is strongly asymptotically consistent (in the offline sense), provided that the true number $\kappa$ of clusters is known, and each sequence $\mathbf{x}_i,~i = 1,\ldots,N$ is sampled from some wide-sense stationary ergodic process.
\end{theorem}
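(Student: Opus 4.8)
The plan is to combine Lemma \ref{lemma1} with the fact that $d^*$ is a genuine metric that separates distinct covariance structures. First I would record the separation property. If two paths are generated by processes in the same ground-truth cluster $G_k$ of Definition \ref{ground-truth}, they share the same mean and covariance structure, so every summand of (\ref{def:d*}) vanishes and $d^*=0$. If they come from different clusters, then either some mean or some covariance matrix differs; since $\mathcal M$ and $\rho^*$ are metrics (so positive on distinct arguments) and the weights $w_m,w_l$ are strictly positive, at least one summand is positive, whence $d^*>0$. Writing $X^{(i)}$ for the process generating $\mathbf x_i$, I set
$$
\delta_{\min}:=\min\Big\{d^*\big(X^{(i)},X^{(j)}\big):~i,j\in\{1,\ldots,N\}\text{ lie in different ground-truth clusters}\Big\}>0,
$$
a minimum over finitely many pairs, hence attained and strictly positive.

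Next I would pass to a single probability-$1$ event on which all empirical distances are simultaneously accurate. Because $N$ is fixed there are only $\binom{N}{2}$ pairs $(\mathbf x_i,\mathbf x_j)$; applying (\ref{limdxx}) of Lemma \ref{lemma1} to each pair and intersecting these finitely many almost-sure events yields an event of probability $1$ on which $\widehat{d^*}(\mathbf x_i,\mathbf x_j)\to d^*\big(X^{(i)},X^{(j)}\big)$ for every pair as $n=\min_i n_i\to\infty$. Fixing such a realization, there is a threshold $n_0$ so that for all $n\ge n_0$ each empirical distance lies within $\delta_{\min}/4$ of its limit. Consequently within-cluster distances obey $\widehat{d^*}<\delta_{\min}/4$ while between-cluster distances obey $\widehat{d^*}>3\delta_{\min}/4$, which is the quantitative gap driving the rest of the argument.

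The crux is showing that the farthest-point initialization (Lines 1--6 of Algorithm \ref{algo::offline_known_k}) selects exactly one seed from each of the $\kappa$ clusters, which I would prove by induction on the number of chosen centers. For the base case, the maximizing pair $(c_1,c_2)$ cannot be within-cluster, since a within-cluster distance is $<\delta_{\min}/4<3\delta_{\min}/4$ and between-cluster pairs exist (as $\kappa\ge2$); hence $c_1,c_2$ lie in distinct clusters. For the inductive step, suppose $c_1,\dots,c_{k-1}$ already lie in $k-1$ distinct clusters with $k-1<\kappa$. Any path from an as-yet-unrepresented cluster has all its distances to the current centers exceeding $3\delta_{\min}/4$, so its score $\min_{j<k}\widehat{d^*}(\mathbf x_i,\mathbf x_{c_j})$ is $>3\delta_{\min}/4$; whereas any path from an already-represented cluster lies within $\delta_{\min}/4$ of the center sharing its cluster, so its score is $<\delta_{\min}/4$. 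Thus the $\argmax$ in Line 5 must select a path from a new cluster, completing the induction and placing one seed in each ground-truth cluster.

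Finally, the assignment loop (Lines 7--11) is correct by the same gap: a path $\mathbf x_i$ in cluster $G_k$ is within $\delta_{\min}/4$ of the unique seed lying in $G_k$ but more than $3\delta_{\min}/4$ from every other seed, so the $\argmin$ in Line 8 places it in its own cluster. Hence for all $n\ge n_0$ the output coincides with the ground-truth partition restricted to the $N$ observed sequences, which is precisely strong asymptotic consistency in the sense of Definition \ref{consistency:offline}. I expect the inductive initialization argument to be the main obstacle---verifying that the farthest-point rule never picks a second seed from an already-represented cluster and that the two seeds opening the procedure are cross-cluster; by contrast, the separation property and the reduction to a finite intersection of almost-sure events are comparatively routine bookkeeping.
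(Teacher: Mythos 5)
Your proposal is correct and follows essentially the same route as the paper: both use Lemma \ref{lemma1} to establish, for all sufficiently long paths, a quantitative gap between within-cluster and between-cluster empirical distances (the paper bounds $\widehat{d^*}(\mathbf x_i,\mathbf x_j)$ via the path-to-process distances $\widehat{d^*}(\mathbf x_i,X^{(l)})$ and the triangle inequality, while you apply the path-to-path convergence directly to the finitely many pairs), and then conclude that the farthest-point initialization seeds each ground-truth cluster exactly once and the assignment step is correct. Your explicit induction on the chosen centers and your verification that $\delta_{\min}>0$ only spell out details the paper leaves implicit, so no substantive difference remains.
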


\begin{proof}
Similar to the idea used in the proof of Theorem 11 in \cite{khaleghi2016}, to prove the consistency statement we will need Lemma \ref{lemma1} to show that if the sample paths in $S$ are long enough, the sample paths that are generated by the same process covariance structure are \enquote{closer} to each other than to the rest. Therefore, the sample paths chosen as cluster centers are each generated by a different covariance structure, and since the algorithm assigns the rest to the closest clusters, the statement follows. More formally, let $n_{\min}$ denote the shortest path length in $S$:
\begin{align*}
n_{\min}:= \min\left\{n_i:~i=1,\ldots,N\right\}.
\end{align*}
Denote by $\delta_{\min}$ the minimum non-zero covariance-based dissimilarity measure between any 2 covariance structures:
\begin{equation}
\label{delta:kk'}
\delta_{\min}:= \min\left\{d^*\left(X^{(k)},X^{(k')}\right):~k,k'\in\{1,\ldots,\kappa\},~k\neq k'\right\}.
\end{equation}
Fix $\varepsilon \in (0, \delta_{\min}/4)$. Since there are a finite number $N$ of observations, by Lemma \ref{lemma1} there is $n_0$ such that for $n_{\min}\ge n_0$ we have
\begin{align}
\label{d:epsilon}
\max_{\substack{l \in \{1,\ldots,\kappa\} \\i \in G_l \cap \left \{ 1,\ldots,N \right \}}} \widehat{d^*}\left(\mathbf{x}_i, X^{(l)}\right)\leq\varepsilon,
\end{align}
where $G_l,~l = 1,\ldots,\kappa$ denote the covariance structure ground-truth partitions given by Definition \ref{ground-truth}. 

On one hand, by using (\ref{d:epsilon}), the triangle inequality (see Remark \ref{rmk_1}) and the fact that 
$$
\max_{i\in I}(a_i+b_i)\le \max_{i\in I}a_i+\max_{i\in I}b_i
$$
for any indexes set $I$ and any real numbers $a_i$'s and $b_i$'s, we obtain
\begin{eqnarray}
\label{upperbound}
&&\max_{\substack{l \in \{1,\ldots,\kappa\} \\i,j \in G_l \cap \left \{ 1,\ldots,N \right \}}} \widehat{d^*}\left(\mathbf{x}_i, \mathbf{x}_j\right)\nonumber\\
&&\leq \max_{\substack{ l \in \{1,\ldots,\kappa\} \\i,j \in G_l \cap \left \{ 1,\ldots,N \right \}}} \widehat{d^*}\left(\mathbf{x}_i, X^{(l)}\right)+\max_{\substack{l \in \{1,\ldots,\kappa\} \\i,j \in G_l \cap \left \{ 1,\ldots,N \right \}}} \widehat{d^*}\left(\mathbf{x}_j, X^{(l)}\right)\nonumber\\
&&= \max_{\substack{ l \in \{1,\ldots,\kappa\} \\i \in G_l \cap \left \{ 1,\ldots,N \right \}}} \widehat{d^*}\left(\mathbf{x}_i, X^{(l)}\right)+\max_{\substack{l \in \{1,\ldots,\kappa\} \\j \in G_l \cap \left \{ 1,\ldots,N \right \}}} \widehat{d^*}\left(\mathbf{x}_j, X^{(l)}\right)\nonumber\\
&&\le 2 \varepsilon< \frac{\delta_{\min}}{2}.
\end{eqnarray}
On the other hand, by using the triangle inequality (see Remark \ref{rmk_1}), (\ref{delta:kk'}) and (\ref{d:epsilon}), we have for $n_{\min}\ge n_0$,
\begin{eqnarray}
\label{lowerbound}
&&\min_{\substack{ k,k'\in\{1,\ldots,\kappa\},k\neq k'\\ i \in G_k \cap \left \{ 1,\ldots,N \right \} \\j \in G_{k'} \cap \left \{ 1,\ldots,N \right \}}} \widehat{d^*}(\mathbf{x}_i, \mathbf{x}_j) \nonumber\\
&&\geq \min_{\substack{k,k'\in\{1,\ldots,\kappa\},k\neq k'\\ i \in G_k \cap \left \{ 1,\ldots,N \right \} \\j \in G_{k'} \cap \left \{ 1,\ldots,N \right \}}}\left\{ d^*\left(X^{(k)}, X^{(k')}\right) - \widehat{d^*}\left(\mathbf{x}_i, X^{(k)}\right)- \widehat{d^*}\left(\mathbf{x}_j, X^{(k')}\right)\right\}  \nonumber\\
&&\geq \delta_{\min}-2\varepsilon> \frac{\delta_{\min}}{2}.
\end{eqnarray}
In words, (\ref{upperbound}) together with (\ref{lowerbound}) indicates that the sample paths in $S$ that are generated by the same covariance structure are closer to each other than to the rest of sample paths. Then by (\ref{upperbound}) and (\ref{lowerbound}), for $n_{\min}\ge n_0$, we necessarily have each sample path should be \enquote{close} enough to its cluster center, i.e.,
\begin{equation}
\label{center}
\max_{i = 1,\ldots,N} \min_{k = 1,\ldots,\kappa-1} \widehat{d^*} (\mathbf{x}_i, \mathbf{x}_{c_k})>\frac{\delta_{\min}}{2},
\end{equation}
where the $\kappa$ cluster centers' indexes $c_1,\ldots,c_\kappa$ are given by Algorithm \ref{algo::offline_known_k} as
$$
(c_1,c_2) := \argmax_{i,j = 1,\ldots,N,~i<j}\widehat{d^*} (\mathbf{x}_i, \mathbf{x}_{j}),
$$
and
$$
c_k :=\argmax_{i = 1,\ldots,N} \displaystyle\min_{j = 1,\ldots,k-1} \widehat{d^*} (\mathbf{x}_i, \mathbf{x}_{c_j}),~k = 3,\ldots,\kappa.
$$
Hence, the indexes $c_1,\ldots, c_{\kappa}$ will be chosen to index the sample paths generated by different process covariance structures. Then by (\ref{upperbound}) and (\ref{lowerbound}), each remaining sample path will be assigned to the cluster center corresponding to the sample path generated by the same process covariance structure. Finally Theorem \ref{thm:offline} results from (\ref{upperbound}), (\ref{lowerbound}) and (\ref{center}).
\end{proof}

\begin{theorem}
\label{thm:online}
Algorithm \ref{algo::online_known_k} is strongly asymptotically consistent (in the online sense), provided the true number of clusters $\kappa$ is known, and each sequence $\mathbf{x}_i, i \in \mathbb{N}$ is sampled from some wide-sense stationary ergodic process.
\end{theorem}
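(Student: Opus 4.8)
The plan is to reduce the online statement to the offline one through Definition \ref{consistency:online}: it suffices to fix an arbitrary $N\in\mathbb N$ and to show that, with probability $1$, there is a (random) time after which the clustering $f(S(t),\kappa)|_N$ produced by Algorithm \ref{algo::online_known_k} coincides with the ground truth $G|_N$. Since in the online setting the lengths of the first $N$ sequences tend to infinity with $t$, the target is to prove that each of the points $1,\dots,N$ is eventually assigned, by the weighted-voting rule on Line 14, to the candidate center sharing its covariance structure, uniformly over the growing index set $j\in\{\kappa,\dots,N(t)\}$.

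First I would isolate the \emph{mature} batches. Let $m^\ast\ge\kappa$ be the smallest index for which $\{1,\dots,m^\ast\}$ meets every ground-truth class $G_1,\dots,G_\kappa$, and set $m^{\ast\ast}:=\max_k\min G_k$. Applying Theorem \ref{thm:offline} (via Lemma \ref{lemma1}) to each \emph{fixed} batch consisting of the first $j$ sequences, whose minimal length grows with $t$, shows that with probability $1$ the subroutine \textbf{Alg1} clusters the first $j$ sequences correctly from some time on. Fixing a cutoff $J\ge m^{\ast\ast}$ and intersecting the finitely many almost-sure events for $j\in\{m^\ast,\dots,J\}$, I obtain a time after which, for every such $j$: the min-index selection rule on Line 8 forces the centers to stabilise to the fixed low-index representatives $\{\min G_1,\dots,\min G_\kappa\}$ (all long paths); $\gamma_j\ge\delta_{\min}/2$ by the lower-bound computation in (\ref{lowerbound}); and, for every $i\le N$ of class $G_k$, one has $\widehat{d^*}(\mathbf x_i^t,\mathbf x_{c_k^j}^t)\le\varepsilon<\delta_{\min}/4$ while $\widehat{d^*}(\mathbf x_i^t,\mathbf x_{c_{k'}^j}^t)>\delta_{\min}/2$ for $k'\neq k$, exactly as in (\ref{upperbound})--(\ref{lowerbound}). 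In particular the normalization factor is bounded below uniformly in $t$ by $\eta\ge\tfrac{\delta_{\min}}{2}\sum_{j=m^\ast}^{J}\beta(j)=:\eta_0>0$.

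Next I would control the \emph{immature} batches $j>J$, which at every instant include the most recently observed, still-short sequences and may be clustered incorrectly. The decisive facts are that the weights $\beta(j)$ are summable (e.g. $\beta(j)=1/(j(j+1))$) and that the empirical distances entering the vote are uniformly bounded, say $\widehat{d^*}\le D<\infty$ on the relevant almost-sure event, whence $\gamma_j\le D$ and each summand $w_j\gamma_j\widehat{d^*}(\cdot,\cdot)\le D^2\beta(j)$. Thus the aggregate contribution of the immature batches to any of the $\kappa$ competing sums is at most $D^2\sum_{j>J}\beta(j)$, a bound that is independent of $t$ and can be forced below $\tfrac{\eta_0}{2}\cdot\tfrac{\delta_{\min}}{4}$ by enlarging $J$. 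Dividing through by $\eta\ge\eta_0$, the immature terms perturb the normalized objective
\[
\frac{1}{\eta}\sum_{j=\kappa}^{N(t)}w_j\gamma_j\,\widehat{d^*}\big(\mathbf x_i^t,\mathbf x_{c_{k'}^j}^t\big)
\]
by strictly less than the margin $\delta_{\min}/4$ manufactured by the mature batches.

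Combining the two estimates, for each $i\le N$ of class $G_k$ the normalized objective above is strictly smallest at $k'=k$, so Algorithm \ref{algo::online_known_k} assigns $i$ to its correct cluster; as this holds simultaneously for all $i\le N$ on an almost-sure event, $f(S(t),\kappa)|_N=G|_N$ eventually, which is the claim. I expect the main obstacle to be precisely the treatment of the immature batches: because $N(t)\to\infty$ there are, at every time, batches containing arbitrarily short sequences whose offline clustering is unreliable, so one cannot invoke offline consistency at the index $j=N(t)$. The argument must therefore secure \emph{uniform-in-$t$} estimates, namely a lower bound $\eta_0$ on the normalization coming from the finitely many mature batches and an upper bound $D$ making the summable tail $\sum_{j>J}\beta(j)$ genuinely dominate the immature contribution. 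Establishing the latter rigorously---an almost-sure uniform bound on the covariance-based distances (aided by the fact that centers are min-index, hence tend to be long paths), or an equivalent truncation---is the delicate technical point.
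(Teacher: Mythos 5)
Your overall strategy is the same as the paper's: fix $N$, split the batches at a finite cutoff $J$, invoke Lemma \ref{lemma1} and Theorem \ref{thm:offline} on the finitely many mature batches so that the candidate centers stabilise at the first representative of each class and $\eta$ is bounded below by a positive multiple of $\delta_{\min}$, and then argue that summability of the weights makes the immature batches' contribution to the weighted vote negligible. The decomposition, the role of your $m^{\ast}$ (the paper's $m(N(t))$), the lower bound on $\eta$ via $\gamma_j\ge\delta_{\min}/2$, and the final margin argument all coincide with the paper's proof.

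The one genuine gap is exactly the one you flag: the uniform-in-$t$ envelope $\widehat{d^*}\le D$ over all batches $j\le N(t)$. Such a constant is not available as stated: since $N(t)\to\infty$, at every time some candidate centers $c_{k'}^j$ with $j>J$ are freshly observed, very short paths, whose empirical covariances (hence $\gamma_j^t$ and $\widehat{d^*}(\mathbf{x}_i^t,\mathbf{x}_{c_{k'}^j}^t)$) admit no fixed bound; indeed the paper explicitly names the unboundedness of $\widehat{d^*}$ as the main point of departure from \cite{khaleghi2016}, where the distributional distance is bounded by $1$. The paper's substitute is the triangle-inequality chain (\ref{bound_dt1}), $\widehat{d^*}(\mathbf{x}_j^t,X^{(k)})\le d^*(X^{(k)},X^{(k_j')})+\widehat{d^*}(\mathbf{x}_j^t,X^{(k_j')})$, in which the second term is bounded in $t$ for each fixed $j$ because by Lemma \ref{lemma1} it converges to $0$; taking maxima yields the bound $B(N(t))$ of (\ref{bound_dt}), which depends on the number of observed paths rather than being uniform in $t$, and the tail is then absorbed as $\varepsilon\cdot 2B(N(t))M(N(t))/(w_{m(N(t))}\delta_{\min})$ in (\ref{upper1})--(\ref{conv_proof}) using $\sum_{j>J}w_j\le\varepsilon$, instead of by enlarging $J$ against a fixed $D^2$. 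To complete your version you would need either to prove an almost-sure envelope of this type or to adopt the paper's $N(t)$-dependent bound; note also that your heuristic that min-index centers \enquote{tend to be long paths} is only guaranteed for the mature batches where offline consistency has already taken hold, not for $j>J$.
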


\begin{proof}
The idea of the proof is similar to that of Theorem 12 in \cite{khaleghi2016}. The main  differences between the 2 proofs are made by the fact that our covariance-based dissimilarity measure $\widehat{d^*}$ is not bounded by some constant. Although it is not mentioned in the pseudo-code Algorithm \ref{algo::online_known_k}, the notations $\gamma_j$'s and $\eta$ are dependent of $t$, therefore we denote $\gamma_j^t:=\gamma_j$ and $\eta^t:=\eta$ through this proof.  
In the first step, by using the triangle inequality we can show that for any $t>0$, any $N\in\mathbb N$,
\begin{eqnarray}
\label{bound_dt1}
&&\sup_{\substack{j\in\{1,\ldots,N\}\\ k\in\{1,\ldots,\kappa\}}}\widehat{d^*} \left(\mathbf{x}_{j}^t ,X^{(k)} \right)\le \sup_{\substack{j\in\{1,\ldots,N\}\\ k\in\{1,\ldots,\kappa\}}}\left(d^*\left(X^{(k)}, X^{(k_j')}\right) +\widehat{d^*}\left(\mathbf{x}_{j}^t, X^{(k_j')}\right) \right)\nonumber\\
&&\le \sup_{\substack{j\in\{1,\ldots,N\}\\ k\in\{1,\ldots,\kappa\}}}d^*\left(X^{(k)}, X^{(k_j')}\right) +\sup_{\substack{j\in\{1,\ldots,N\}\\ k\in\{1,\ldots,\kappa\}}}\widehat{d^*}\left(\mathbf{x}_{j}^t, X^{(k_j')}\right)\nonumber\\
&&= \sup_{\substack{j\in\{1,\ldots,N\}\\ k\in\{1,\ldots,\kappa\}}}d^*\left(X^{(k)}, X^{(k_j')}\right) +\sup_{j\in\{1,\ldots,N\}}\widehat{d^*}\left(\mathbf{x}_{j}^t, X^{(k_j')}\right),
\end{eqnarray}
where for each $j$, $k_j'$ is chosen such that $\mathbf x_j^t$ is sampled from the process covariance structure $X^{(k_j')}$. On one hand,
let
\begin{equation}
\label{deltamax:kk'}
\delta_{\max}:= \max\left\{d^*\left(X^{(k)},X^{(k')}\right):~k,k'\in\{1,\ldots,\kappa\},~k\neq k'\right\},
\end{equation}
then the first term on the right-hand side of (\ref{bound_dt1}) can be bounded by the constant $\delta_{\max}$, which neither depends on $t$ nor on $N$:
\begin{equation}
    \label{bound_right_1}
    \sup_{\substack{j\in\{1,\ldots,N\}\\ k\in\{1,\ldots,\kappa\}}}d^*\left(X^{(k)}, X^{(k_j')}\right) \le \delta_{\max}.
\end{equation}
On the other hand, since $\mathbf x_j^t$ is sampled from $X^{(k_j')}$, by using the weak ergodicity (see Lemma \ref{lemma1}), for $j=1,\ldots,N$, with probability $1$,
$$
\lim_{t\to\infty}\widehat{d^*}\left(\mathbf{x}_{j}^t, X^{(k_j')}\right)=0.
$$
This together with the fact that a convergent sequence is also bounded, leads to, for each $j\in\{1,\ldots, N\}$, there is $b_j$ (not depending on $t$) such that
$$
\widehat{d^*}\left(\mathbf{x}_{j}^t, X^{(k_j')}\right)\le b_j,~\mbox{for all $t\ge 0$}.
$$
Therefore the second term on the right-hand side of (\ref{bound_dt1}) can be bounded as:
\begin{equation}
    \label{bound_right_2}
\sup_{j\in\{1,\ldots,N\}}\widehat{d^*}\left(\mathbf{x}_{j}^t, X^{(k_j')}\right)\le\max\{b_1,\ldots,b_N\}.
\end{equation}
Let 
\begin{equation}
    \label{bound_right_3}
B(N):=\delta_{\max}+\max\{b_1,\ldots,b_N\}.
\end{equation}
It is important to point out that $B(N)$ depends only on $N$ but not on $t$. It follows from (\ref{bound_dt1}),  (\ref{bound_right_1}), (\ref{bound_right_2}) and (\ref{bound_right_3}) that
\begin{equation}
\label{bound_dt}
\sup_{\substack{j\in\{1,\ldots,N\}\\ k\in\{1,\ldots,\kappa\}}}\widehat{d^*} \left(\mathbf{x}_{j}^t ,X^{(k)} \right)\le B(N).
\end{equation}
Let $\delta_{\min}$ be the one given in (\ref{delta:kk'}). Fix $\varepsilon \in (0, \delta_{\min}/4)$. By using (\ref{sum_weights}), we can choose some $J>0$ so that 
\begin{equation}
\label{bound_wJ}
\sum_{j = J+1}^\infty w_j\le\varepsilon.
\end{equation}
Recall that in online setting, the $i$th sample path's length $n_i(t)$ grows with time, for each $i$. Therefore, by the wide-sense ergodicity (see Lemma \ref{lemma1}), for every $j \in  \{1,\ldots,J\}$ there exists some $T_1(j)>0$ such that for all $t \geq T_1(j)$ we have
\begin{align}
\label{d:upperbound}
 \max_{\substack{k \in \{1,\ldots,\kappa\} \\ i \in G_k \cap \left \{ 1,\ldots,j \right \}} } \widehat{d^*}\left(\mathbf{x}_i^t, X^{(k)}\right) \le\varepsilon.
\end{align}
For $k = 1,\ldots,\kappa$, define $s_k(N(t))$ to be the index of the first path in $S(t)$ sampled from the covariance structure $X^{(k)}$, i.e.,
\begin{align}
\label{sk}
s_k(N(t)) := \min \left \{ i \in G_k \cap \{1,\ldots,N(t)\} \right \}.
\end{align}
Note that $s_k(N(t))$ depends only on $N(t)$. 
Then denote
\begin{align}
\label{m_t}
m(N(t)) := \max_{k \in \{1,\ldots,\kappa\}} s_k(N(t)).
\end{align}
By Theorem \ref{thm:offline} for every $j \in \{m(N(t)),\ldots,J\}$ there exists some $T_2(j)$ such that $\mbox{Alg1}(S(t)|_j, \kappa)$ is asymptotically consistent for all $t \geq T_2(j)$, where $S(t)|_j = \left \{ \mathbf{x}_1^t, \ldots, \mathbf{x}_j^t \right \}$ denotes the subset of $S(t)$ consisting of the first $j$ sample paths.  Let
\begin{align*}
T:= \max_{\substack{i=1,2\\ j \in \{1,\ldots,J\}}} T_i(j).
\end{align*}
Recall that, by the definition of $m(N(t))$ in (\ref{m_t}), $S(t)|_{m(N(t))}$ contains sample paths from all $\kappa$ distinct covariance structures. Therefore, similar to obtaining (\ref{lowerbound}), for all $t \geq T$, we use the triangle inequality, (\ref{delta:kk'}) and (\ref{d:upperbound}) to obtain
\begin{eqnarray}
\label{lowerbound2}
&&\min_{\substack{ k,k'\in\{1,\ldots,\kappa\}\\ k\neq k'}} \widehat{d^*}\left(\mathbf{x}_{c_{k}^{m(N(t))}}^t , \mathbf{x}_{c_{k'}^{m(N(t))}}^t \right)\nonumber\\
&&\geq \min_{\substack{ k,k'\in\{1,\ldots,\kappa\}\\ k\neq k'}}\left(d^*\left(X^{(k)}, X^{(k')}\right) -\left(\widehat{d^*}\left(\mathbf{x}_{c_k^{m(N(t))}}^t, X^{(k)}\right) + \widehat{d^*}\left(\mathbf{x}_{c_{k'}^{m(N(t))}}^t, X^{(k')}\right)  \right)\right)\nonumber \\
&&\geq \delta_{\min} - 2\varepsilon \geq \frac{\delta_{\min}}{2}.
\end{eqnarray}
From Algorithm \ref{algo::online_known_k} (see Lines 9, 11) we see 
$$
\eta^t : = \sum_{j=1}^{N(t)}w_j\gamma_j^t,\quad\mbox{with}\quad
\gamma_j^t:=\min_{\substack{k,k'\in \{1,\ldots,\kappa\}\\ k\neq k'}} \widehat{d^*}\left(\mathbf{x}_{c_k^j}^t,\mathbf{x}_{c_{k'}^j}^t\right).
$$
Hence, by (\ref{lowerbound2}), for all $t\geq T$,
\begin{align}
\label{etabound}
\eta^t \geq \frac{w_{m(N(t))} \delta_{\min}}{2}.
\end{align}
For $j\in\{ J+1,\ldots,N(t)\}$, by the triangle inequality and (\ref{bound_dt}), we have for all $t\ge T$,
\begin{eqnarray}
\label{upperbound1}
&&\gamma_j^t=\min_{\substack{ k,k'\in\{1,\ldots,\kappa\}\\ k\neq k'}} \widehat{d^*}\left(\mathbf{x}_{c_{k}^{j}}^t , \mathbf{x}_{c_{k'}^{j}}^t \right)\nonumber\\
&&\le \min_{\substack{ k,k'\in\{1,\ldots,\kappa\}\\ k\neq k'}}\left(d^*\left(X^{(k)}, X^{(k')}\right) +\left(\widehat{d^*}\left(\mathbf{x}_{c_k^{j}}^t, X^{(k)}\right) + \widehat{d^*}\left(\mathbf{x}_{c_{k'}^{j}}^t, X^{(k')}\right)  \right)\right)\nonumber \\
&&\leq \delta_{\max} + 2B(N(t)).
\end{eqnarray}
Denote by
$$
M(N(t)):=\delta_{\max} + 2B(N(t)), 
$$
then (\ref{upperbound1}) can be interpreted as: for all $t\ge T$, 
\begin{equation}
\label{def:M}
\sup_{j\in\{J+1,\ldots,N(t)\}}\gamma_j^t\le M(N(t)).
\end{equation}
By (\ref{bound_dt}), (\ref{etabound}) and (\ref{def:M}), for every $k \in \{1,\ldots,\kappa\}$ we obtain
\begin{eqnarray}
\label{upper1}
&&\frac{1}{\eta^t}\sum_{j=1}^{N(t)} w_j \gamma_j^t\widehat{d^*}\left(\mathbf{x}_{c_k^j}^t ,X^{(k)} \right)\nonumber\\
&&=\frac{1}{\eta^t}\sum_{j=1}^{J} w_j \gamma_j^t\widehat{d^*}\left(\mathbf{x}_{c_k^j}^t ,X^{(k)} \right)+\frac{1}{\eta^t}\sum_{j=J+1}^{N(t)} w_j \gamma_j^t\widehat{d^*}\left(\mathbf{x}_{c_k^j}^t ,X^{(k)} \right)\nonumber\\
&&\leq \frac{1}{\eta^t}  \sum_{j=1}^{J} w_j \gamma_j^t \widehat{d^*} \left(\mathbf{x}_{c_k^j}^t ,X^{(k)} \right) + \frac{2B(N(t))M(N(t))}{w_{m(N(t))}\delta_{\min}}\sum_{j=J+1}^{N(t)}w_j\nonumber\\
&&= \frac{1}{\eta^t}  \sum_{j=1}^{m(N(t))-1} w_j \gamma_j^t \widehat{d^*} \left(\mathbf{x}_{c_k^j}^t ,X^{(k)} \right) +\frac{1}{\eta^t}  \sum_{j=m(N(t))}^{J} w_j \gamma_j^t \widehat{d^*} \left(\mathbf{x}_{c_k^j}^t ,X^{(k)} \right)\nonumber\\
&&\hspace{1cm}+ \frac{2B(N(t))M(N(t))\varepsilon}{w_{m(N(t))}\delta_{\min}}.
\end{eqnarray}
Next we provide upper bounds of the first 2 items in the right-hand side of (\ref{upper1}). On one hand, by the definition of $m(N(t)$, the sample paths in $S(t)|_j$ for $j = 1,\ldots,m(N(t)) - 1$ are generated by at most $\kappa -1$ out of the $\kappa$ process covariance structures. Therefore for each $j \in \{1,\ldots,m(N(t))-1\}$ there exists at least one pair of distinct cluster centers that are generated by the same process covariance structure. Consequently, by (\ref{d:upperbound}) and the definition of $\eta^t$, for all $t \geq T$ and $k\in\{1,\ldots,\kappa\}$,
\begin{equation}
\label{upper2}
\frac{1}{\eta^t} \sum_{j=1}^{m(N(t))-1} w_j \gamma_j^t \widehat{d^*}\left(\mathbf{x}_{c_k^j}^t ,X^{(k)}\right) \leq \frac{\varepsilon}{\eta^t} \sum_{j=1}^{m(N(t))-1} w_j \gamma_j^t \leq \varepsilon.
\end{equation}
On the other hand, since the clusters are ordered in the order of appearance of the distinct covariance structures, we have $\mathbf{x}_{c_l^j}^t = \mathbf{x}_{s_l(N(t))}^t$ for all $j = m,\ldots,J$ and $l = 1,\ldots,\kappa$, where the index $s_l(N(t))$ is defined in (\ref{sk}). Therefore, by (\ref{d:upperbound}) and the definition of $\eta^t$, for all $t \geq T$ and every $l=1,\ldots,\kappa$ we have
\begin{equation}
\label{upper3}
\frac{1}{\eta^t} \displaystyle \sum_{j=m(N(t))}^{J}w_j \gamma_j^t \widehat{d^*}\left(\mathbf{x}_{c_l^j}^t,X^{(l)}\right) =  \widehat{d^*} \left(\mathbf{x}_{s_l(N(t))}^t ,X^{(l)}\right) \frac{1}{\eta^t}\sum_{j=m(N(t))}^{J} w_j \gamma_j^t \leq \varepsilon.
\end{equation}
Combining (\ref{upper1}), (\ref{upper2}), (\ref{upper3}) and (\ref{d:upperbound}) we obtain, for $t\ge T$,
\begin{eqnarray}
\label{conv_proof}
\frac{1}{\eta^t} \displaystyle \sum_{j=1}^{N(t)} w_j \gamma_j^t \widehat{d^*}\left(\mathbf{x}_{c_k^j}^t ,X^{(k)}\right)\le \varepsilon \left(2 + \frac{2B(N(t))M(N(t))}{w_{m(N(t))}\delta_{\min}}\right)
\end{eqnarray}
for all $l = 1,\ldots,\kappa$.

Now we explain how to use (\ref{conv_proof}) to prove the asymptotic consistency of Algorithm \ref{algo::online_known_k}. Consider an index $i \in G_{k'}$ for some $k' \in \{1,\ldots,\kappa\}$. Then on one hand, using (\ref{upper2}) and (\ref{upper3}), we get for $k\in\{1,\ldots,\kappa\}$, $k\neq k'$,
\begin{eqnarray}
\label{lower:dhat}
&&\frac{1}{\eta^t} \sum_{j=1}^{N(t)} w_j \gamma_j^t \widehat{d^*}\left(\mathbf{x}_i^t,\mathbf{x}_{c_k^j}^t\right)\nonumber\\
&&\geq \frac{1}{\eta^t} \sum_{j=1}^{N(t)} w_j \gamma_j^t \widehat{d^*}\left(\mathbf{x}_i^t ,X^{(k)}\right) - \frac{1}{\eta^t} \sum_{j=1}^{N(t)} w_j \gamma_j^t \widehat{d^*}\left(\mathbf{x}_{c_k^j}^t ,X^{(k)}\right) \nonumber \\
&& \geq \frac{1}{\eta^t} \sum_{j=1}^{N(t)} w_j \gamma_j^t \left(d^*\left(X^{(k)},X^{(k')}\right)- \widehat{d^*}\left(\mathbf{x}_i^t, X^{(k')} \right)\right)\nonumber\\
&&\hspace{2cm}- \frac{1}{\eta^t} \sum_{j=1}^{N(t)} w_j \gamma_j^t \widehat{d^*}\left(\mathbf{x}_{c_k^j}^t ,X^{(k)}\right) \nonumber \\
&&\geq \delta_{\min} - 2\varepsilon \left(2 + \frac{2B(N(t))M(N(t))}{w_{m(N(t))} \delta_{\min}}\right).
\end{eqnarray}
On the other hand, for any $N\in\mathbb N$, by using the wide-sense ergodicity, there is $T(N)$ such that for all $t\ge T(N)$,
\begin{align}
\label{left_bound}
 \max_{\substack{k \in \{1,\ldots,\kappa\} \\ i \in G_k \cap \left \{ 1,\ldots,N \right \}} } \widehat{d^*}\left(\mathbf{x}_i^t, X^{(k)}\right) \le\varepsilon.
\end{align}
Since $\varepsilon$ can be arbitrarily chosen, it follows from (\ref{lower:dhat}) and (\ref{left_bound}) that
\begin{align}
\label{final:bound}
\argmin_{k \in \{1,\ldots,\kappa\}} \frac{1}{\eta^t}  \sum_{j=1}^{N(t)} w_j \gamma_j \widehat{d^*}\left(\mathbf{x}_i^t ,\mathbf{x}_{c_k^j}^t\right) = k'
\end{align}
holds almost surely for all $i=1,\ldots,N$ and all $t\ge \max\{T,T(N)\}$. 
Theorem \ref{thm:online} is proved.
\end{proof}
The next part involves discussion of the complexity costs of the above two algorithms. 
\begin{enumerate}
\item For offline setting, our Algorithm \ref{algo::offline_known_k} requires $N(N-1)/2$ calculations of $\widehat{d^*}$, against $\kappa N$ calculations of $\widehat{d}$ in the offline algorithm in \cite{khaleghi2016}. In each $\widehat{d^*}$, the matrices distance $\rho^*$ consists of $m_n^2$ calculations of Euclidean distances. Then iterating over $m,l$ in $\widehat{d^*}$ we see that at most $\mathcal O(nm_n^3)$ computations of Euclidean distances, against $\mathcal O(nm_n/|\log s|)$ computations of $\hat d$ for the offline algorithm in \cite{khaleghi2016}, where
$$
s=\min_{\substack{ X_i^{(1)}\neq X_j^{(2)} \\ i\in\{1,\ldots,n_1\};j\in\{1,\ldots,n_2\}}}\left|X_i^{(1)}-X_j^{(2)}\right|.
$$
It is known that efficient searching algorithm can be utilized to determine $s$,  with at most $\mathcal O(n\log(n))$ ($n=\min\{n_1,n_2\}$) computations. Therefore our Algorithm \ref{algo::offline_known_k} is computationally competitive to the one in \cite{khaleghi2016}.
\item For online setting, we can hold a similar discussion as in \cite{khaleghi2016}, Section 5.1. There it shows the computational complexity of updates of $\widehat{d^*}$ for both our Algorithm \ref{algo::online_known_k} and the online algorithm in \cite{khaleghi2016} is at most $\mathcal O(N(t)^2+N(t)\log^3n(t))$ (here we take $m_{n(t)}=\lfloor\log n(t)\rfloor$). Therefore the overall difference of computational complexities between the 2 algorithms are reflected by the complexity of computing $\widehat{d^*}$ and $\widehat{d}$ (see Point 1).
\end{enumerate}
\subsection{Efficient Dissimilarity Measure}
\citet{kleinberg2003} presented a set of three simple properties that a good clustering function should have: \textit{scale-invariance}, \textit{richness} and \textit{consistency}. Further, he demonstrated that there is no clustering function that satisfies these properties at the meanwhile. He pointed out, as one particular example, that the centroid-based clustering basically does not satisfy the above \textit{consistency} property (note that this is a different concept from our asymptotic consistency). In this section we show that, although the consistency property is not satisfied, there exists some other criterion of efficiency of dissimilarity measure in a particular setting. It is the so-called \textit{efficient dissimilarity measure}.
\begin{definition}[Efficient dissimilarity measure]
\label{efficientd}
Assume that the samples $S=\{\mathbf x(\xi):~\xi\in \mathcal H\}$ ($\mathcal H\subset\mathbb R^q$ for some $q\in\mathbb N$), meaning that all the paths $\mathbf x(\xi)$ are indexed by a set of real-valued parameters $\xi$. Then a clustering function is called efficient if its dissimilarity measure $d$ satisfies that, there exists $c>0$ so that for any $\mathbf x(\xi_1),\mathbf x(\xi_2)\in S$,
$$
d(\mathbf x(\xi_1),\mathbf x(\xi_2))=c\|\xi_1-\xi_2\|,
$$
where $\|\cdot\|$ denotes some norm defined over $\mathbb R^q$.
\end{definition}
Mathematically, efficient dissimilarity measure is a metric induced by some norm. Clustering processes based on efficient dissimilarity measure will then be equivalent to clustering under classical distances in $\mathbb R^q$, such as Euclidean distance, Manhattan distance, or Minkowski distance. The latter setting has well-known advantages in cluster analysis. For example,  Euclidean distance performs well when deployed to datasets that include compact or isolated clusters \citep{Jain1996,Jain1999}; when the shape of clusters is hyper-rectangular \citep{Xu2005}, Manhattan distance can be used; Minkowski distance, including Euclidean and Manhattan distances as its particular cases, can be utilized to solve clustering obstacles \citep{Wilson1997}. There is a rich literature on comparing the above three distances to each other through discussing of their advantages and inconveniences. We refer to \citet{Shirkhorshidi2015} and the references therein.

In the next section we present an excellent example, to show how to improve the efficiency of our consistent algorithms, for clustering self-similar processes with wide-sense stationary ergodic increments.
\section{Self-similar Processes and Logarithmic Transformation}
\label{sec:log}
In this section we introduce a non-linear transformation of the covariance matrices in $\widehat{d^*}$, in order to improve the efficiency of clustering. This transformation is based on logarithmic function. We use one example to explain how this transformation works. We show this transformation maps $\widehat{d^*}$ to some covariance-based dissimilarity measure similar to an efficient one, when applied to clustering self-similar processes.

\begin{definition}[Self-similar process, see \cite{Samorodnitsky1994}]
A process $X^{(H)}=\{X_t^{(H)}\}_{t\in T}$ (e.g., $T=\mathbb R$ or $\mathbb Z$) is self-similar with index $H\in(0,1)$ if, for all $n\in\mathbb N$, all $t_1,\ldots,t_n\in T$,  and all $c\neq0$ such that $ct_i\in T$ ($i=1,\ldots,n$), 
$$
\Big(X_{t_1}^{(H)},\ldots,X_{t_n}^{(H)}\Big)\stackrel{law}{=}\Big(|c|^{-H}X_{ct_1}^{(H)},\ldots,|c|^{-H}X_{ct_n}^{(H)}\Big).
$$
\end{definition}
It can be shown that a self-similar process has necessarily zero mean and its  covariance structure is indexed by its self-similarity index $H$, in the following way \citep{Embrechtsu2000}.
\begin{theorem}
\label{cov:self}
Let $\big\{X_t^{(H)}\big\}_{t\in T}$ be a zero-mean self-similar process with index $H\in(0,1)$ and  with wide-sense stationary ergodic increments. Assume $\mathbb E|X_1^{(H)}|^2<+\infty$, then for any $s,t\in T$,
$$
\mathbb Cov\left(X_s^{(H)}, X_t^{(H)} \right)=\frac{\mathbb E|X_1^{(H)}|^2}{2}\left(|s|^{2H}+|t|^{2H}-|s-t|^{2H}\right).
$$
\end{theorem}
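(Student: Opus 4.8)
The plan is to reduce the covariance to three single- and double-point second moments via polarization, and then to evaluate each using self-similarity for the variances and wide-sense stationarity of the increments for the cross term. Writing $\sigma^2 := \mathbb{E}|X_1^{(H)}|^2$ and using that $X^{(H)}$ has zero mean, so that $\mathbb{Cov}(X_s^{(H)}, X_t^{(H)}) = \mathbb{E}(X_s^{(H)} X_t^{(H)})$, I would start from the elementary identity
$$
\mathbb{E}\big(X_s^{(H)} X_t^{(H)}\big) = \frac{1}{2}\Big(\mathbb{E}|X_s^{(H)}|^2 + \mathbb{E}|X_t^{(H)}|^2 - \mathbb{E}\big|X_t^{(H)} - X_s^{(H)}\big|^2\Big),
$$
valid since all three second moments will be shown to be finite. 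It then suffices to compute the three moments on the right-hand side.

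First I would record the single-time variance and the value at the origin. Applying the self-similarity definition with the single index $t_1 = t \neq 0$ and scaling factor $c = 1/t$ (so that $|c|^{-H} = |t|^{H}$) gives $X_t^{(H)} \stackrel{law}{=} |t|^{H} X_1^{(H)}$, whence $\mathbb{E}|X_t^{(H)}|^2 = |t|^{2H}\sigma^2 < +\infty$, and likewise $\mathbb{E}|X_s^{(H)}|^2 = |s|^{2H}\sigma^2$. I would also note that $X_0^{(H)} = 0$ almost surely: self-similarity applied at $t_1 = 0$ forces $X_0^{(H)} \stackrel{law}{=} |c|^{-H} X_0^{(H)}$ for every admissible $c$, so that $\mathbb{E}|X_0^{(H)}|^2 = |c|^{-2H}\,\mathbb{E}|X_0^{(H)}|^2$, which, since $H>0$, is only possible if $\mathbb{E}|X_0^{(H)}|^2 = 0$.

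Next I would handle the increment term using the wide-sense stationarity of the increments. For the fixed lag $h = t - s$, the increment process $\{X_{u+h}^{(H)} - X_u^{(H)}\}_u$ is wide-sense stationary in the sense of Definition \ref{process}, so its second moment is invariant under the time shift carrying $u = s$ to $u = 0$; that is, $\mathbb{E}|X_t^{(H)} - X_s^{(H)}|^2 = \mathbb{E}|X_{t-s}^{(H)} - X_0^{(H)}|^2 = \mathbb{E}|X_{t-s}^{(H)}|^2 = |t-s|^{2H}\sigma^2$, where the last two equalities use $X_0^{(H)} = 0$ and the variance formula from the previous step (the sign of $t-s$ is absorbed into $|t-s|^{2H}$). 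Substituting the three computed moments into the polarization identity yields exactly $\tfrac{\sigma^2}{2}(|s|^{2H} + |t|^{2H} - |s-t|^{2H})$, as claimed.

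The only genuinely delicate point is the increment step: one must invoke the correct notion of stationarity, namely wide-sense stationarity of the lag-$h$ increment process as encoded in Definition \ref{process}, to justify that $\mathbb{E}|X_t^{(H)} - X_s^{(H)}|^2$ depends on $s,t$ only through $t-s$, and then to reduce it to $\mathbb{E}|X_{t-s}^{(H)}|^2$ via $X_0^{(H)}=0$. Finiteness of every term, which legitimizes the polarization identity, follows from $\sigma^2 < +\infty$ together with the scaling relation $\mathbb{E}|X_t^{(H)}|^2 = |t|^{2H}\sigma^2$. Everything else is a routine consequence of the self-similarity scaling and the zero-mean assumption.
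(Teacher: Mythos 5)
The paper does not actually prove this statement; it is quoted as a known result with a citation to Embrechts and Maejima, so there is no in-paper argument to compare against. Your polarization proof is the standard derivation and is essentially correct: the identity $\mathbb E(X_s^{(H)}X_t^{(H)})=\tfrac12\bigl(\mathbb E|X_s^{(H)}|^2+\mathbb E|X_t^{(H)}|^2-\mathbb E|X_t^{(H)}-X_s^{(H)}|^2\bigr)$, the scaling relation $X_t^{(H)}\stackrel{law}{=}|t|^H X_1^{(H)}$ from the one-point self-similarity with $c=1/t$, and the reduction of the increment moment to $\mathbb E|X_{t-s}^{(H)}|^2$ via wide-sense stationarity of the lag-$(t-s)$ increment process together with $X_0^{(H)}=0$ are exactly the right ingredients, and you correctly identify the increment step as the one place where the stationarity hypothesis is genuinely used. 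The only soft spot is your justification that $X_0^{(H)}=0$ almost surely: the relation $\mathbb E|X_0^{(H)}|^2=|c|^{-2H}\,\mathbb E|X_0^{(H)}|^2$ forces the moment to vanish only if you already know it is finite, which is not among the hypotheses (the assumption is on $X_1^{(H)}$, and the scaling relation $\mathbb E|X_t^{(H)}|^2=|t|^{2H}\sigma^2$ is derived only for $t\neq 0$). The clean fix is distributional rather than moment-based: $X_0^{(H)}\stackrel{law}{=}|c|^{-H}X_0^{(H)}$ for every admissible $c$, so for any $\varepsilon>0$, $\mathbb P(|X_0^{(H)}|>\varepsilon)=\mathbb P(|X_0^{(H)}|>|c|^{H}\varepsilon)\to 0$ as $|c|\to\infty$, giving $X_0^{(H)}=0$ a.s.\ directly. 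With that one-line repair the argument is complete.
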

The corollary below follows.
\begin{corollary}
Let $\{X_t^{(H)}\}_{t\in T}$ be a zero-mean self-similar process with index $H$ and weakly stationary increments. Assume $\mathbb E|X_1^{(H)}|^2<+\infty$. For $h>0$ small enough, define the increment process $Z_h^{(H)}(s)=X_{s+h}^{(H)}-X_s^{(H)}$, then for $s,t\in T$ such that $s-t\ge h$, we have
\begin{equation}
\label{cov:increment}
\mathbb Cov\left(Z_h^{(H)}(s),Z_h^{(H)}(t) \right)=\frac{\mathbb E|X_1^{(H)}|^2}{2} \left((s-t-h)^{2H}+(s-t+h)^{2H}-2(s-t)^{2H} \right).
\end{equation}
\end{corollary}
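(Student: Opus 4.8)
The plan is to reduce the entire computation to Theorem \ref{cov:self} by exploiting the bilinearity of the covariance operator, which is available here because $\mathbb E|X_1^{(H)}|^2<+\infty$ forces all the relevant second moments to be finite. Writing $Z_h^{(H)}(s)=X_{s+h}^{(H)}-X_s^{(H)}$ and $Z_h^{(H)}(t)=X_{t+h}^{(H)}-X_t^{(H)}$, the first step is to expand
\begin{align*}
\mathbb Cov\left(Z_h^{(H)}(s),Z_h^{(H)}(t)\right)&=\mathbb Cov\left(X_{s+h}^{(H)},X_{t+h}^{(H)}\right)-\mathbb Cov\left(X_{s+h}^{(H)},X_{t}^{(H)}\right)\\
&\quad-\mathbb Cov\left(X_{s}^{(H)},X_{t+h}^{(H)}\right)+\mathbb Cov\left(X_{s}^{(H)},X_{t}^{(H)}\right).
\end{align*}

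Second, I would substitute the closed form from Theorem \ref{cov:self} into each of the four terms, pulling out the common factor $\mathbb E|X_1^{(H)}|^2/2$. Each covariance then becomes a combination of $|\cdot|^{2H}$ evaluated at the two individual time points and at their difference; for instance $\mathbb Cov(X_{s+h}^{(H)},X_{t}^{(H)})$ contributes $|s+h|^{2H}+|t|^{2H}-|s-t+h|^{2H}$.

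Third, I would collect the eight resulting $|\cdot|^{2H}$ summands. The key bookkeeping observation is that the four ``diagonal'' contributions $|s+h|^{2H}$, $|t+h|^{2H}$, $|s|^{2H}$, $|t|^{2H}$ each appear once with a plus sign and once with a minus sign across the four covariances and therefore cancel pairwise, while the difference terms combine to leave precisely $|s-t+h|^{2H}+|s-t-h|^{2H}-2|s-t|^{2H}$ times the constant factor. Finally, I would invoke the hypothesis $s-t\ge h>0$ to conclude that the three arguments $s-t+h$, $s-t$, $s-t-h$ are all nonnegative, so the absolute values may be dropped, yielding the stated formula.

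The only delicate point is the sign and cancellation accounting in the third step, together with the justification for removing the absolute values via $s-t\ge h$; the remaining steps are entirely mechanical applications of bilinearity and Theorem \ref{cov:self}.
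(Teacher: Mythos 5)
Your proposal is correct and is precisely the intended derivation: the paper gives no explicit proof, asserting only that the corollary "follows" from Theorem \ref{cov:self}, and your bilinear expansion of $\mathbb Cov(X_{s+h}^{(H)}-X_s^{(H)},X_{t+h}^{(H)}-X_t^{(H)})$ into four terms, the pairwise cancellation of the single-time-point contributions, and the use of $s-t\ge h$ to drop the absolute values fill in exactly the omitted computation. No gaps.
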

Applying three times the mean value theorem to (\ref{cov:increment}) leads to 
\begin{eqnarray}
\label{asym:cov}
&&\mathbb Cov\left(Z_h^{(H)}(s), Z_h^{(H)}(t) \right)=H\mathbb E |X_1^{(H)}|^2\left((v_1^{(H)})^{2H-1}-(v_2^{(H)})^{2H-1}\right)h\nonumber\\
&&=H(2H-1)\mathbb E |X_1^{(H)}|^2(v^{(H)})^{2H-2}h,
\end{eqnarray}
for some $v_1^{(H)}\in(s-t,s-t+h)$, $v_2^{(H)}\in(s-t-h,s-t)$ and $v^{(H)}\in(v_2^{(H)},v_1^{(H)})$.  We see that the item $\mathbb Cov\left(Z_h^{(H)}(s),Z_h^{(H)}(t)\right)$ is a non-linear function of $H$. Next we would find a function $g$ such that $g\left(\mathbb Cov\left(Z_h^{(H)}(s),Z_h^{(H)}(t)\right)\right)$ is linearly dependent of $H$. To this end we introduce the following $\log^*$-transformation: for $x\in\mathbb R$, define
$$
\log^*(x):=\sgn(x)\log|x|=\left\{
\begin{array}{ll}
\log(x)&\mbox{if $x>0$};\\
-\log(-x)&\mbox{if $x<0$};\\
0&\mbox{if $x=0$}.
\end{array}\right.
$$
Introduction to $\log^*$-transformation is driven by the following 2 motivations:
\begin{description}
\item [\textbf{Motivation 1}] The $\log^*$ function transforms the current dissimilarity measure to the one which \enquote{linearly} depends on its variable $H$.
\item [\textbf{Motivation 2}] The value $\log^*(x)$ preserves the sign of $x$, which leads to the consequence that larger distance between $x,y$ yields larger distance between $\log^*(x)$ and $\log^*(y)$. 
\end{description}
Applying $\log^*$-transformation to the covariances of $Z_h^{(H)}$ given in (\ref{asym:cov}), we obtain
\begin{eqnarray*}
&&\log^*\left(\mathbb Cov\left(Z_h^{(H)}(s),Z_h^{(H)}(t)\right)\right)\\
&&=\sgn(2H-1)\left((2H-2)\log v^{(H)}+\log h+\log(H|1-2H|\mathbb Var(X_1^{(H)}))\right).
\end{eqnarray*}
When $v^{(H)}$ and $h$ are small the items $\log v^{(H)}$ and $\log h$ are significantly large so $\log(H|1-2H|\mathbb Var(X_1^{(H)}))$ becomes negligible. Thus we can write
$$
\log^*\left(\mathbb Cov\left(Z_h^{(H)}(s),Z_h^{(H)}(t)\right)\right)\approx\sgn(2H-1)\left((2H-2)\log v^{(H)}+\log h\right).
$$
In conclusion, 
\begin{itemize}
\item When $H_1,H_2\in(0,1/2]$ or $H_1,H_2\in[1/2,1)$, the item $\log^*\left(\mathbb Cov\left(Z_h^{(H)}(s),Z_h^{(H)}(t)\right)\right)$ is \enquote{approximately linear} on $H\in(0,1/2]$ or on $H\in[1/2,1)$. 

Using the approximation $\log v^{(H_1)}\approx \log v^{(H_2)}$ for $H_1,H_2\in(0,1/2]$ or $H_1,H_2\in[1/2,1)$, we have
\begin{eqnarray*}
&&\log^*\left(\mathbb Cov\left(Z_h^{(H_1)}(s),Z_h^{(H_1)}(t)\right)\right)-\log^*\left(\mathbb Cov\left(Z_h^{(H_2)}(s),Z_h^{(H_2)}(t)\right)\right)\\
&&\approx 2\sgn(2H_1-1)(H_1-H_2)\log v^{(H_1)}.
\end{eqnarray*}
\item When $H_1\in(0,1/2]$ and $H_2\in(1/2,1)$, $\log^*\left(\mathbb Cov\left(Z_h^{(H)}(s),Z_h^{(H)}(t)\right)\right)$ turns out to be relatively large, because we have 
\begin{eqnarray*}
&&\log^*\left(\mathbb Cov\left(Z_h^{(H_1)}(s),Z_h^{(H_1)}(t)\right)\right)-\log^*\left(\mathbb Cov\left(Z_h^{(H_2)}(s),Z_h^{(H_2)}(t)\right)\right)\\
&&\approx-(2H_1-2)\log v^{(H_1)}-(2H_2-2)\log v^{(H_2)}\\
&&\ge 2(2-H_1-H_2)\min\left\{\log v^{(H_1)},\log v^{(H_2)}\right\}.
\end{eqnarray*}
\end{itemize}
Taking advantage of the above facts we define the new empirical covariance-based dissimilarity measure (based on the definition (\ref{def:d*_bis})) to be
$$
\widehat{d^{**}}(\mathbf z_{1},\mathbf z_{2}):=\sum_{m= 1}^{m_n} \sum_{l= 1}^{n-m+1} w_m w_l \rho^*\left (\nu^{**}(Z^{(H_1)}_{l\ldots n},m), \nu^{**} (Z^{(H_2)}_{l\ldots n},m)\right),
$$
where $\nu^{**}(Z^{(H_1)}_{l\ldots n},m)$ is the empirical covariance matrix of $Z_h^{(H_1)}$,  $\nu^*(Z^{(H_1)}_{l\ldots n},m)$, with each of its coefficients transformed by $\log^*$: let $M=\{M_{i,j}\}_{i=1,\ldots,m;~j=1,\ldots,n}$ be an arbitrary real-valued matrix, define
$$
\log^*M:=\left\{\log^*M_{ij}\right\}_{i=1,\ldots,m;~j=1,\ldots,n}.
$$
Then we have 
$$
\nu^{**}(Z^{(H_1)}_{l\ldots n},m):=\log^*\left(\nu^{*}(Z^{(H_1)}_{l\ldots n},m)\right).
$$
Now given 2 wide-sense stationary ergodic processes $X^{(1)}$, $X^{(2)}$, we choose $\{w_j\}_{j\in\mathbb N}$ to satisfy
\begin{equation}
\label{condition_2}
\sum_{m,l=1}^{\infty}  w_m w_l \rho^*\left(\log^*(V_{l,l+m-1}(X^{(1)})),\log^*(V_{l,l+m-1}(X^{(2)})\right)<+\infty,
\end{equation}
where we denote by
$$
V_{l,l+m-1}(X^{(1)}):=\mathbb Cov\left(X_l^{(1)},\ldots,X_{l+m-1}^{(1)}\right).
$$
Then define the $\log^*$-transformation of the covariance-based dissimilarity measure to be
\begin{equation}
\label{d**}
d^{**}(X^{(1)},X^{(2)}):=\sum_{m,l=1}^{\infty}  w_m w_l \rho^*\left(\log^*(V_{l,l+m-1}(X^{(1)})),\log^*(V_{l,l+m-1}(X^{(2)})\right).
\end{equation}
Using the fact that $\log^*$ is continuous over $\mathbb R\backslash\{0\}$ and the weak ergodicity of $Z_h^{(H)}$, we have the following version of ergodicity:
$$
\widehat{d^{**}}(\mathbf z_{1},\mathbf z_{2})\xrightarrow[n\to\infty]{a.s.}d^{**}\big(Z_h^{(H_1)},Z_h^{(H_2)}\big).
$$
Unlike $\widehat{d^*}$, the dissimilarity measure $\widehat{d^{**}}$ is approximately linear with respect to the self-similarity index $H$.  Indeed, it is easy to see that
\begin{equation}
\label{d**}
\widehat{d^{**}}(\mathbf z_{1},\mathbf z_{2})\sim\left\{
\begin{array}{ll}
|H_1-H_2|<1,&\mbox{for $H_1,H_2\in(0,1/2]$ or $H_1,H_2\in[1/2,1)$};\\
2(2-H_1-H_2)>1,&\mbox{for $H_1\in(0,1/2)$ and $H_2\in[1/2,1)$},
\end{array}\right.
\end{equation}
where $H_1,H_2$ correspond to the self-similarity indexes of $X^{(H_1)},X^{(H_2)}$ respectively. In fact, from (\ref{d**}) we can say that $\widehat{d^{**}}$ satisfies Definition \ref{efficientd} in the wide sense: it is approximately linearly dependent of $|H_1-H_2|$ when $H_1,H_2$ are in the same group out of $(0,1/2]$ and $[1/2,1)$; it is approximately larger than $|H_1-H_2|$ when $H_1,H_2$ are in different groups out of $(0,1/2]$ and $[1/2,1)$. This fact allows our asymptotically consistent algorithms to be more efficient when clustering self-similar processes with weakly stationary increments, having different values of $H$. In Section \ref{sec::sim_gauss} we provide an example of clustering using our consistent algorithms with and without the $\log^*$-transformation, when the observed paths are from a well-known self-similar process with stationary increments -- fractional Brownian motion.

\section{Simulation and Empirical Study} \label{sec::exper_results}
This section is devoted to applying our clustering algorithms to several synthetic data and real-world data. It is worth noting that, in our statistical setting, the auto-covariance functions are supposed to be unavailable, then the prior choice of the weights $w_j$ presents some trade-off between the convergence of the dissimilarity measure and practical application. On one hand, low rate of convergence (e.g. $w_j=1/j(j+1)$) risks to a divergent dissimilarity measure $d^*$ (see (\ref{def:d*})). On the other hand, high rate of convergence (e.g., $w_j=1/j^3(j+1)^3$) will only make use of some first observations in the sample paths. We believe that the first issue is a minor one in practice, because for most of the wide-sense stationary ergodic processes (especially Gaussian) taking $w_j=1/j(j+1)$ can lead to convergent $d^*$. Also, in practice, instead of (\ref{def:d*}) it is fine to regard
$$
d^*\left(X^{(1)},X^{(2)}\right):= \sum_{m,l = 1}^{N} w_m w_l \rho^*\left(V_{l,l+m-1}(X^{(1)}),V_{l,l+m-1}(X^{(2)})\right),
$$
for some $N$ large enough.

Therefore, through this entire section we take $w_j=1/j(j+1)$ and $m_n=\lfloor \log n\rfloor$ (recall that $\lfloor\cdot\rfloor$ denotes the floor number) in the covariance-based dissimilarity measure $\widehat{d^*}$. Next we explain how to prepare offline and online datasets in this simulation study.
\begin{description}
\item[\textbf{Offline dataset simulation:}] For each scenario, we simulate 5 groups of sample paths, each consists of 10 paths with length $N(t)=5t$, for the time steps  $t=1,2,\ldots,50$. Algorithm \ref{algo::offline_known_k} is performed over 100 such scenarios, and the misclassification rate is calculated.
\item[\textbf{Online dataset simulation:}] For each scenario, we simulate 5 groups of sample paths. Let the total number of sample paths be $N(t) = 30 + \lfloor (t-1)/10 \rfloor$ at each time step $t$. That is, there are 6 sample paths in each of the 5 groups when $t=1$. And the number of sample paths in each group will increase by 1 once the time $t$ increases by 10. For $i=1,2,\ldots$, the $i$th sample path in each group has length $n_i(t) = 5[t-(i-6)^+]$, where $x^+ = \max(x,0)$. 
\end{description}
We then apply the proposed clustering algorithms to both offline and online settings, and determine their corresponding misclassification rates. These misclassification rates are utilized to intuitively illustrate the asymptotic consistency of our clustering algorithms, or to compare the performances of our clustering approaches to other ones.  Recall that the misclassification rate (i.e. mean clustering error rate, see Section 6 in \cite{khaleghi2016}) is obtained by dividing the number of misclassified paths by the total number of paths per scenario, then average all these fractions:
\begin{equation*}
p:=avg\left(\frac{\#~\mbox{of misclassified sample paths}}{\#~\mbox{of total sample paths collected}}\right).
\end{equation*}
More precisely, let $(C_1,\ldots,C_\kappa)$ denote the ground truth clusters of the $N$ sample paths $\mathrm x_1,\ldots, \mathrm x_N$. We define the ground truth cluster labels by
$$
L_k = \underbrace{(k,\ldots,k)}_{\# C_k ~\mbox{times}},~\mbox{for}~k=1,\ldots,\kappa.
$$
Let $(l_1,\ldots,l_N)$ denote the cluster labels of $(\mathrm x_1,\ldots,\mathrm x_N)$ output by some clustering approach. Then the misclassification rate $p$ of this approach is computed by
\begin{equation}
\label{misclassification_rate}
p=\min_{\substack{\sigma\in S_{\kappa}\\(\pi_1,\ldots,\pi_N)=(L_{\sigma(1)},\ldots,L_{\sigma(\kappa)})}}\frac{\sum\limits_{i=1}^N{\mathds 1_{\{\pi_i\ne l_i\}}}}{N},
\end{equation}
where $S_\kappa$ denotes the group of all possible permutations over the set $\{1,\ldots,\kappa\}$.

For example, in one scenario of 7 sample paths, if the ground truth cluster labels of $(\mathrm x_1,\ldots,\mathrm x_7)$ satisfy
$$
\left(L_1, L_2, L_3\right)= ((1,1),(2),(3,3,3,3)),
$$
while the clustering algorithm output cluster labels
 corresponding to $(\mathrm x_1,\ldots,\mathrm x_7)$ are given by 
$$
\left(l_1,\ldots,l_7\right) = \left(2,1,1,2,3,2,1\right),
$$
then according to Eq.  (\ref{misclassification_rate}), the misclassification rate is $4/7$. This can be explained as, at least 4 changes of labels are needed to let the output cluster labels match that of the ground truth ones $(1,1,3,2,2,2,2)$:
$$
l_1\leftarrow{} 1;~l_3\leftarrow{} 3;~l_5\leftarrow{} 2;~l_7\leftarrow{} 2.
$$
We provide the implementation of the misclassification rate (see Eq. (\ref{misclassification_rate})) in MATLAB  publicly online as \textbf{misclassify\_rate.m} \footnote{\url{https://github.com/researchcoding/clustering_stochastic_processes/blob/master/misclassify_rate.m}.}.

\subsection{Clustering Non-Gaussian Discrete-time Stochastic Processes}
\label{sub:simulation}
In \cite{khaleghi2016} a simulation study on a non-Gaussian  strictly stationary ergodic discrete-time stochastic process (see also \cite{shields1996}) has been performed. Since this process has finite covariance structure, it is also wide-sense stationary ergodic. As a result we can test our clustering algorithms over the same dataset and compare their performances to the ones in \cite{khaleghi2016}. Recall that this process $\{X_t\}_{t\in\mathbb N}$ is  generated in the following way. Fix some \textit{irrational-valued} parameter $\alpha \in (0,1)$.
\begin{itemize} [leftmargin=0.8in]
\item[\bf{Step 1.}] Draw a uniform random number $r_0 \in [0,1]$. 
\item[\bf{Step 2.}] For each index $i=1,2,\ldots,N$:
\begin{description}
\item[\bf{Step 2.1.}] Define
$
r_i = r_{i-1} + \alpha - \lfloor r_{i-1} + \alpha \rfloor.
$
\item[\bf{Step 2.2.}] Define
$
X_i = 
\begin{cases}
1 & \qquad \textrm{when } r_i > 0.5, \\
0 & \qquad \textrm{otherwise}.
\end{cases}
$
\end{description}
\end{itemize}
We simulate 5 groups of sample paths $\{X_t\}_{t\in\mathbb N}$ indexed by the irrational values $\alpha_1 = 0.31...$, $\alpha_2 = 0.33...$, $\alpha_3 = 0.35...$, $\alpha_4 = 0.37...$, $\alpha_5 = 0.39...$ ($\alpha_i$, $i=1,\ldots,5$, each is simulated by a longdouble with a long
mantissa, see \cite{khaleghi2016}), respectively. 
\subsubsection{Offline Dataset}
We demonstrate the asymptotic consistency of Algorithm \ref{algo::offline_known_k} by conducting offline clustering on the simulated offline datasets of $\{X_i\}_{i\in\mathbb N}$. 

The valid blue line in Fig. \ref{fig1::offline_known_kappa} illustrates the asymptotic consistency of Algorithm \ref{algo::offline_known_k} through the fact that its misclassification rate decreases as time $t$ increases. Compared to the simulation study over the same dataset in \cite{khaleghi2016}, the misclassification rate provided by our proposed algorithm converges at a comparable speed (see Figure 2 in \cite{khaleghi2016}), even though Algorithm \ref{algo::offline_known_k} aims to cluster \enquote{covariance structures} but not \enquote{process distributions}. 

The dot-dashed red line in Fig. \ref{fig1::offline_known_kappa} presents the performance of Algorithm \ref{algo::online_known_k} and compares its misclassification rates with the ones from Algorithm \ref{algo::offline_known_k}. Applied to offline dataset, the offline algorithm's misclassification rates are consistently lower than the online algorithm, i.e., the offline dataset clustering algorithm performs better than the online dataset clustering algorithm, when dealing with offline datasets.

\begin{figure}[H]
\centering
\includegraphics[scale = 0.6]{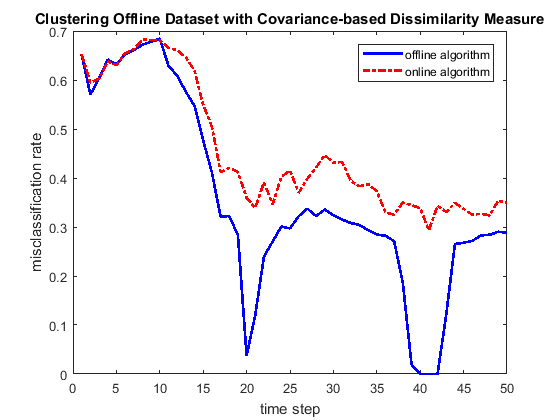}
\caption{The graph compares the misclassification rates of Algorithm \ref{algo::offline_known_k} and Algorithm \ref{algo::online_known_k} applied to offline dataset of non-Gaussian discrete-time processes. 100 runs are performed at each time step $t$ to compute the misclassification rate.}
\label{fig1::offline_known_kappa}
\end{figure}
\subsubsection{Online Dataset}
In our simulated online datasets the number of sample paths and the length of each sample path increase as $t$ increases. This type of setting is mimicking the situation such as modeling financial asset prices, where new assets are launched at each time step. The offline and online clustering algorithms are applied at each time $t$ with $100$ runs, their misclassification rates at each time $t$ are then obtained.
 
Fig. \ref{fig1::online_known_kappa} compares the misclassification rates of offline algorithm and online algorithm applied to the online dataset described above. The periodical pattern, that misclassification rate increases per 10 time steps using offline algorithm, matches the timing of adding new observations. That is, the misclassification rate spikes whenever new observations are obtained. We observe that the misclassification rate of the online algorithm is overall lower than that of offline algorithm in this dataset, reflecting the advantage of online algorithm against the offline one in the case where new observations are expected to occur. It is worth pointing out that our online setting is different from the one in \cite{khaleghi2016}, therefore the two clustering results are not comparable.

Finally, all the codes in MATLAB that reproduce the main conclusions in this subsection can be found publicly online\footnote{\url{https://github.com/researchcoding/clustering_WSSP_with_cov_distance}.}.

\begin{figure}[H]
\centering
\includegraphics[scale = 0.6]{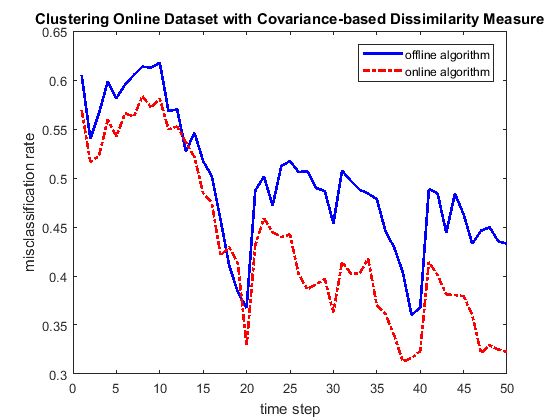}
\caption{The graph compares the misclassification rates of Algorithm \ref{algo::offline_known_k} and Algorithm \ref{algo::online_known_k} applied to online dataset of non-Gaussian discrete-time processes. 100 runs are performed at each time step $t$ to compute the misclassification rate.}
\label{fig1::online_known_kappa}
\end{figure}
\subsection{Clustering Fractional Brownian Motions} \label{sec::sim_gauss}
In this section, we present the performance of proposed offline (Algorithm \ref{algo::offline_known_k}) and online (Algorithm \ref{algo::online_known_k}) methods, on a synthetic dataset sampled from continuous-time Gaussian processes. The wide-sense stationary ergodic processes that we choose are the first order increment processes of \textit{fractional Brownian motions} (see \cite{Mandelbrot1968}). Denote by $\{B^H(t)\}_{t\ge0}$ a fractional Brownian motion with Hurst index $H\in(0,1)$. It is well-known that $B^H$ is a zero-mean self-similar Gaussian process with self-similarity index $H$ and with covariance function
\begin{equation}
\label{cov:BH}
\mathbb Cov\left(B^H(s),B^H(t)\right)=\frac{1}{2}\left(s^{2H}+t^{2H}-|s-t|^{2H}\right),~\mbox{for $s,t\ge0$}.
\end{equation}
Fix $h>0$, define its increment process (with time variation $h$) to be
$$
Z_h^{(H)}(t)=B^H(t+h)-B^H(t),~\mbox{for $t\ge0$}.
$$
$Z_h^{(H)}$ is also called fractional Gaussian noise. Using the covariance function (\ref{cov:BH}) we obtain the auto-covariance function of $Z_h^{(H)}$ below: for $\tau\ge0$,
\begin{equation}
\label{cov:Z}
\gamma(\tau)=\mathbb Cov\left(Z_h^{(H)}(s),Z_h^{(H)}(s+\tau)\right)=\frac{1}{2}\left(|\tau+h|^{2H}+|\tau-h|^{2H}-2|\tau|^{2H}\right).
\end{equation}
Recall that for stationary Gaussian processes such as $Z_h^{(H)}$, the strict ergodicity can be fully expressed in the language of its auto-covariance function $\gamma$, i.e., the following result \citep{Maruyama1970,Slezak2017} provides a sufficient and necessary condition for a stationary Gaussian process to be strictly ergodic.
\begin{theorem}[Strict ergodicity of Gaussian processes]
\label{gaussian:ergodic}
A continuous-time Gaussian stationary process $X$ is strictly ergodic if and only if
\begin{equation}
\label{criterion_ergodic}
\lim_{t\to\infty}\frac{1}{t}\int_0^t|\gamma_X(u)|\ud u=0,
\end{equation}
where $\gamma_X$ denotes the auto-covariance function of $X$.
\end{theorem}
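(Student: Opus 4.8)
The plan is to reduce the statement to two classical ingredients: the spectral characterisation of ergodicity for stationary Gaussian processes, and a Wiener-type theorem linking Cesàro averages of a positive-definite function to the atoms of its spectral measure. First I would normalise by assuming $X$ has zero mean, since subtracting the constant $\mathbb E(X_t)=\mu$ is a measurable isomorphism of the generated flow and leaves $\gamma_X$ unchanged. I would then identify the paper's notion of strict ergodicity (almost sure convergence of empirical frequencies to probabilities, as in (\ref{strictergodic})) with ergodicity of the measure-preserving flow generated by $X$: by Birkhoff's ergodic theorem applied to indicators, together with a separability argument to handle all events $B$ simultaneously, the two notions coincide. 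Finally, by Bochner's theorem I would write $\gamma_X(\tau)=\int_{\mathbb R}e^{i\lambda\tau}\ud F(\lambda)$ for a finite symmetric spectral measure $F$, and record that $|\gamma_X(\tau)|\le\gamma_X(0)=F(\mathbb R)<\infty$, so $\gamma_X$ is bounded.

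The substantive step is the spectral criterion for Gaussian systems (Maruyama, Fomin; see \cite{Maruyama1970,Slezak2017}): the flow generated by a zero-mean stationary Gaussian process is ergodic if and only if $F$ is non-atomic. I would justify this through the Wiener chaos decomposition $L^2(\Omega)=\bigoplus_{n\ge0}\mathcal H_n$, on which the Koopman flow acts unitarily, restricting to the first chaos $\mathcal H_1=\overline{\mathrm{span}}\{X_t\}$ with spectral measure $F$ and to the $n$-th chaos $\mathcal H_n$ with the convolution power $F^{*n}$. Ergodicity is exactly the absence of nonconstant invariant $L^2$ functions, equivalently the absence of an atom at $0$ in any $F^{*n}$, $n\ge1$. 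Since $F$ is symmetric, $F^{*2}$ carries an atom at $0$ precisely when $F$ has an atom at some $\lambda$ (pairing $\lambda$ with $-\lambda$), while a non-atomic $F$ forces every $F^{*n}$ to be non-atomic; hence ergodicity is equivalent to $F$ being non-atomic. This is the step where the Gaussian structure is genuinely used, and I expect it to be the main obstacle.

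It then remains to translate non-atomicity of $F$ into the analytic condition (\ref{criterion_ergodic}). Here I would invoke Wiener's theorem, $\lim_{T\to\infty}\frac{1}{T}\int_0^T|\gamma_X(u)|^2\ud u=\sum_{\lambda}F(\{\lambda\})^2$, where the sum runs over the atoms of $F$ and the half-line average equals the symmetric one because $\gamma_X$ is even. Thus $F$ is non-atomic if and only if $\frac{1}{T}\int_0^T|\gamma_X(u)|^2\ud u\to0$. To match the $L^1$ average in the theorem I would close the gap in both directions using boundedness: Cauchy--Schwarz gives $\frac{1}{t}\int_0^t|\gamma_X(u)|\ud u\le\bigl(\frac{1}{t}\int_0^t|\gamma_X(u)|^2\ud u\bigr)^{1/2}$, while $|\gamma_X(u)|^2\le\gamma_X(0)\,|\gamma_X(u)|$ gives $\frac{1}{t}\int_0^t|\gamma_X(u)|^2\ud u\le\gamma_X(0)\,\frac{1}{t}\int_0^t|\gamma_X(u)|\ud u$, so the $L^1$ and $L^2$ Cesàro conditions are equivalent. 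Chaining the equivalences—strict ergodicity $\Leftrightarrow$ $F$ non-atomic $\Leftrightarrow$ $\frac{1}{t}\int_0^t|\gamma_X|^2\ud u\to0$ $\Leftrightarrow$ $\frac{1}{t}\int_0^t|\gamma_X|\ud u\to0$—yields (\ref{criterion_ergodic}). Apart from the first equivalence, every step is soft harmonic analysis.
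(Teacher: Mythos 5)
Your proposal is correct, but there is nothing in the paper to compare it against: the paper states Theorem \ref{gaussian:ergodic} as a known result and imports it by citation to \cite{Maruyama1970} and \cite{Slezak2017}, offering no proof of its own. The argument you give --- Bochner's theorem, the Maruyama--Fomin spectral criterion (Gaussian flow ergodic iff the spectral measure $F$ is non-atomic, proved through the Wiener chaos decomposition and the convolution powers $F^{*n}$), Wiener's theorem identifying $\lim_{T\to\infty}\frac{1}{T}\int_0^T|\gamma_X(u)|^2\ud u$ with $\sum_\lambda F(\{\lambda\})^2$, and the equivalence of the $L^1$ and $L^2$ Ces\`aro conditions via $|\gamma_X(u)|\le\gamma_X(0)$ --- is precisely the classical route taken in those references, and each link in your chain of equivalences is sound. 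The only point you gloss over is the very first one: the paper's definition of strict ergodicity in (\ref{strictergodic}) is phrased via discrete-time empirical frequencies, whereas the criterion (\ref{criterion_ergodic}) concerns the continuous-time spectral measure; one should check that non-atomicity of $F$ and of its $2\pi$-periodization coincide (they do, since $F(\{x\})\le\sum_k F(\{x+2\pi k\})$ and a non-atomic $F$ has non-atomic periodization), so the discrete-sampling and flow notions of ergodicity agree here. With that remark added, your proof is complete and supplies the justification the paper leaves to the literature.
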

In view of (\ref{cov:Z}) we can deduce that the auto-covariance function $\gamma$ of $Z_h^{(H)}$ satisfies (\ref{criterion_ergodic}). This together with Theorem \ref{gaussian:ergodic} yields that $Z_h^{(H)}$ is second-order strict-sense stationary ergodic, so it is also wide-sense stationary ergodic.

To test our algorithms we simulate $\kappa=5$ groups of independent fractional Brownian paths, with the $i$th group containing $10$ paths as $\{B^{H_i}(1/n),\ldots,B^{H_i}((n-1)/n),B^{H_i}(1)\}$, for the self-similarity indexes
$$
H_1=0.3,~H_2=0.4,~\ldots,~H_5=0.7.
$$
Remark that clustering a zero-mean fractional Brownian motion $B^H$ is equivalent to clustering its increments $Z_{1/n}^{(H)}(t)=B^{H}(t+1/n)-B^H(t)$. These total number of $50$ observed paths of $Z_{1/n}^{(H)}(t)$, each of length $150$, compose an offline dataset and an online one.  The clustering algorithms are applied to the dataset at each time step $t$. 100 runs are made to compute the misclassification rates. we use offline (RESP. online) dataset clustering algorithm to cluster offline (RESP. online) dataset. The purpose is to compare the the algorithms with and without $\log^*$-transformations.

Fig. \ref{fig::sim_results} presents the comparisons of 2 algorithms: one is using the dissimilarity measure $\widehat{d^*}$, the other one is using the dissimilarity measure $\widehat{d^{**}}$,  based on the behavior of misclassification rates as time increases. We conclude that, both algorithms with and without the $\log^*$-transformations are asymptotically consistent. However in both offline and online settings, the covariance-based dissimilarity measure algorithms with $\log^*$-transformation (dashed red lines) have 30\% lower misclassification rates on average than that of algorithms without $\log^*$-transformation (solid blue lines). This simulation study proves the necessity of utilizing $\log^*$-transformed covariance-based dissimilarity measure when the underlying observations have nonlinear, especially power based, covariance-based dissimilarity measure, such as observations sampled from self-similar processes.

The codes in MATLAB used in this subsection are provided publicly online\footnote{\url{https://github.com/researchcoding/clustering_stochastic_processes}.}.

\begin{figure}[H]
\centering
\includegraphics[scale = 0.25]{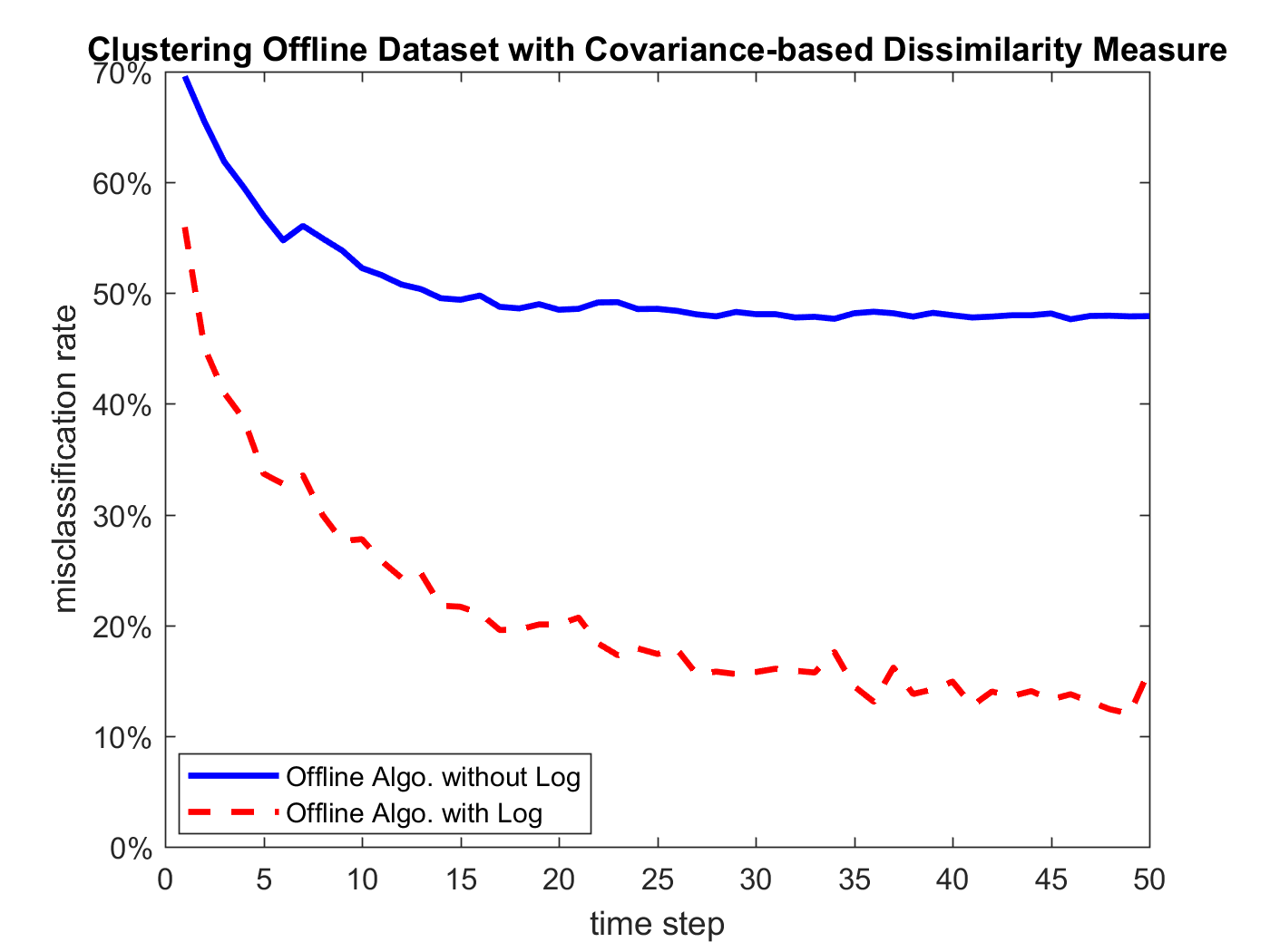} 
\newline
\includegraphics[scale = 0.25]{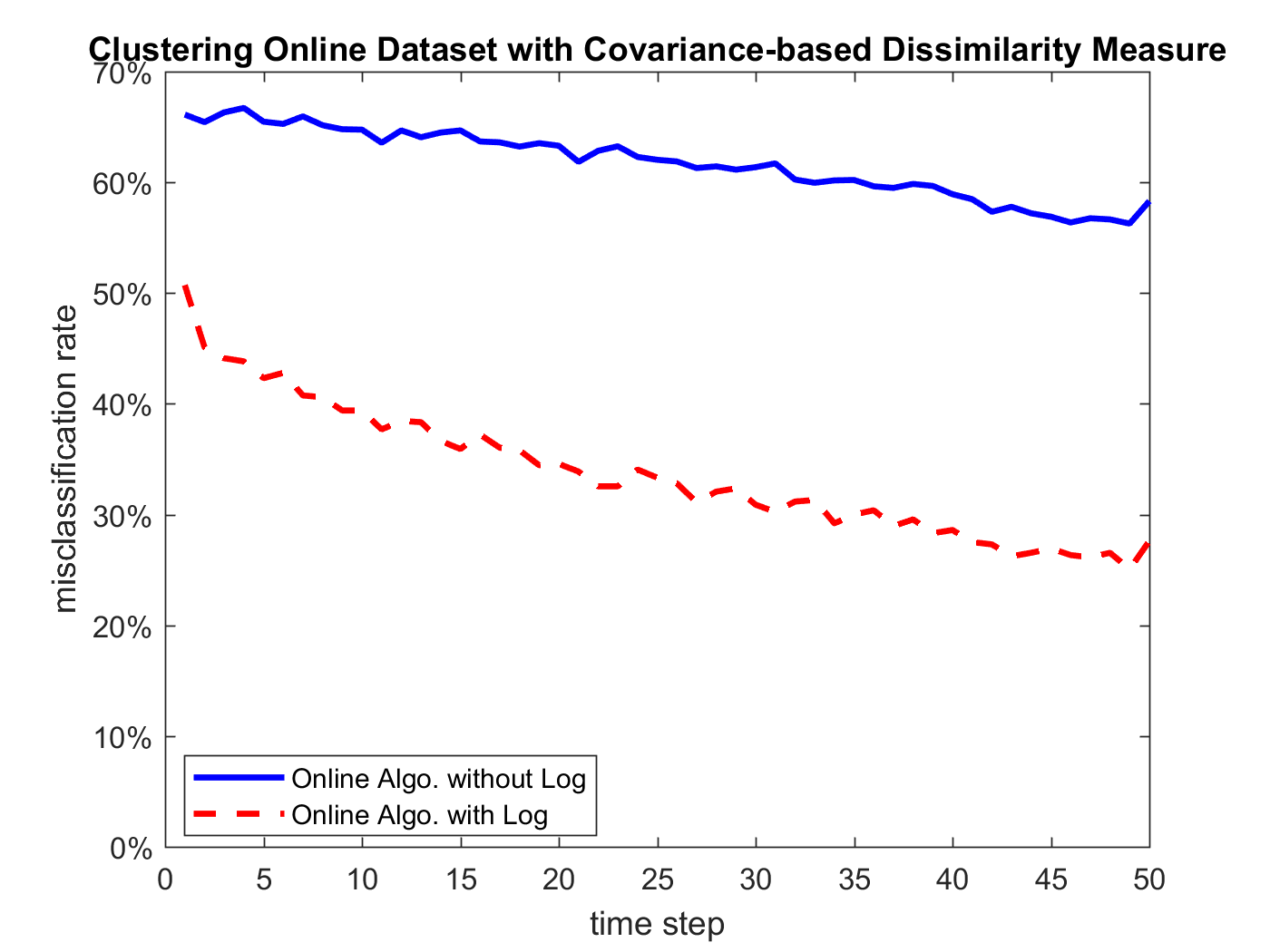}
\newline
\caption{The top graph illustrates the misclassification rates by offline algorithm applied to offline datasets of increments of fractional Brownian motions. The bottom graph plots misclassification rates by online algorithm applied to online datasets.}
\label{fig::sim_results}
\end{figure}

\subsection{Clustering $AR(1)$ Processes: Non Strict-sense Stationary Ergodic} \label{sec::sim_non_gauss}
To show that our algorithms can be applied to clustering non strict-sense stationary ergodic processes, we consider a simulation study on the non-Gaussian $AR(1)$ process $\{Y(t)\}_t$ defined in Example 2, Eq. (\ref{AR1}).  We then conduct the cluster analysis with $\kappa=5$, and specify the values of $a$ in  Eq. (\ref{AR1}) as
\begin{equation*}
a_1 = -0.4, \ a_2 = -0.15, \ a_3 = 0.1, \ a_4 = 0.35, \ a_5 = 0.6.
\end{equation*}
We mimic the procedure in Section \ref{sec::sim_gauss} to generate the offline and online datasets of $\{X(t)\}_t$. Fig. \ref{fig::sim_results_nonGau} illustrates the consistent converging property of offline algorithm and online algorithm under different dataset settings. 

All the codes in MATLAB that reproduce the main conclusions in this subsection can be found publicly online\footnote{\url{https://github.com/researchcoding/clustering_nonGaussian_processes}.}.

\begin{center}
\begin{figure}[H]
\centering
\includegraphics[scale = 0.25]{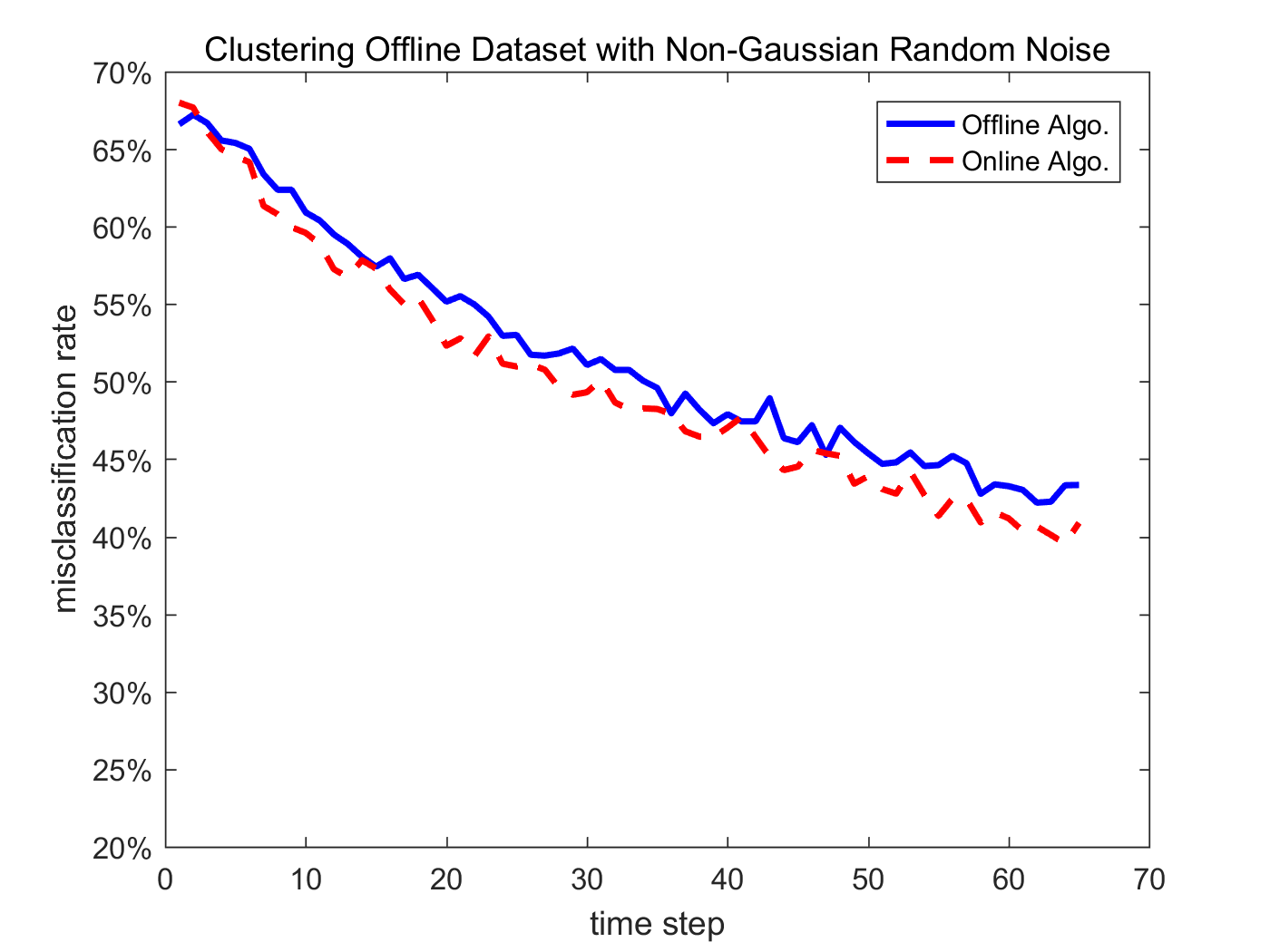} 
\newline
\includegraphics[scale = 0.25]{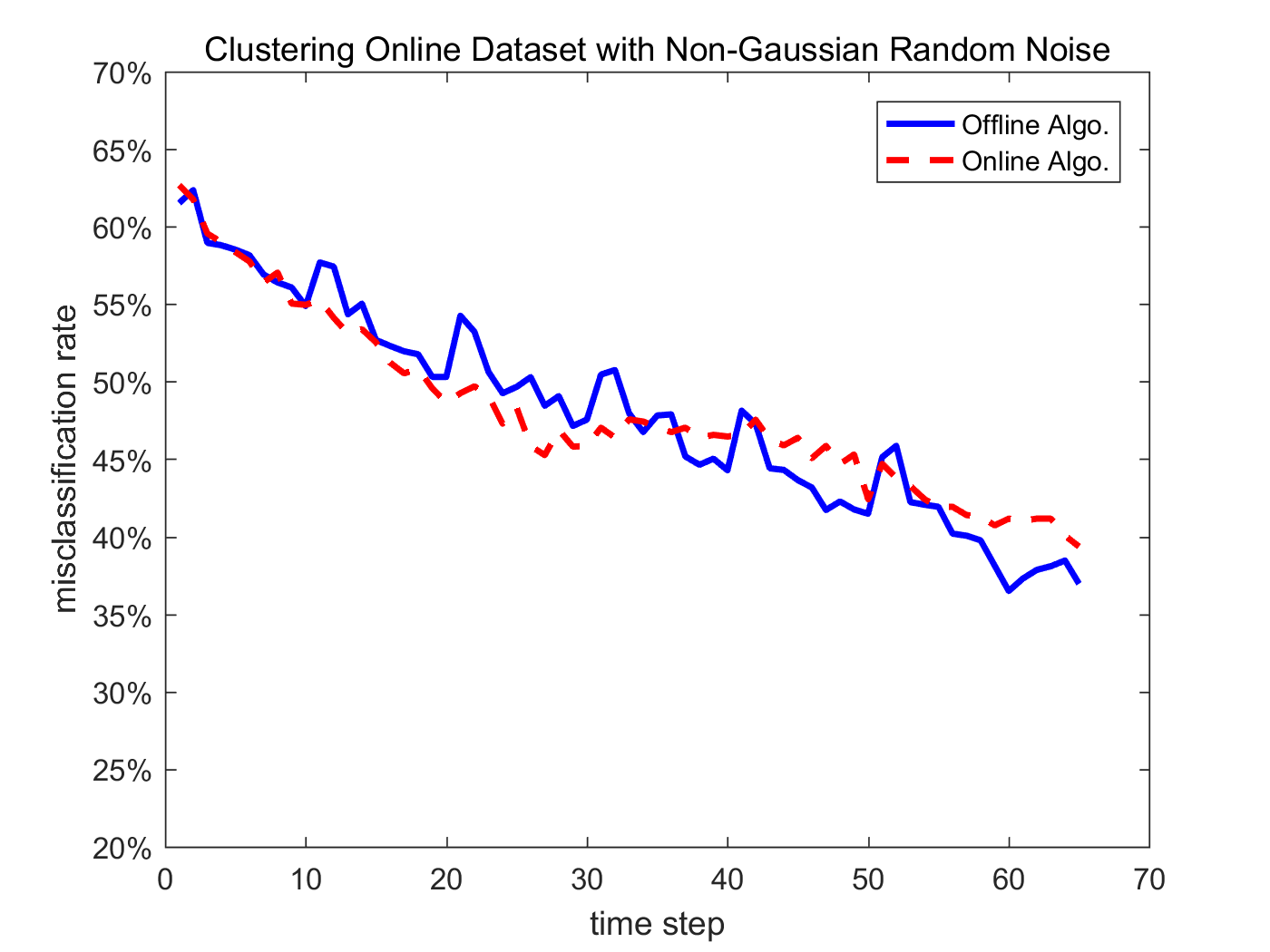}
\newline
\caption{The top graph plots the misclassification rates of ($\log^*$) covariance-based dissimilarity measure along with the increase of time using offline and online algorithms on offline dataset. The bottom graph shows misclassification rates with both algorithms on online dataset.}
\label{fig::sim_results_nonGau}
\end{figure}
\end{center}

\subsection{Application to the Real World: Clustering Global Equity Markets}
\subsubsection{Data and Methodology}
In this section we apply the clustering algorithms to real-world datasets. The application involves in dividing equity markets of major economic entities in the world into different subgroups. In financial economics, researchers usually cluster global equity markets according to either geographical region or the development stage of the underlying economic entities. The reasoning of these clustering methods is that entities with less geographical distance and closer development level involve in more bilateral economic activities. Impacted by similar economic factors, entities with less ``distance'' tend to have higher correlation in stock market performance. This correlation then measures the level of ``comovement'' of stock market indexes on global capital market. 

However, the globalization is breaking the barriers of region and development level. For instance, in 2016 China became the largest trader partner with the U.S. (besides EU)\footnote{Source: U.S. Department of Commerce, Census Bureau, Economic Indicators Division.}. China is not a regional neighbor of the U.S., and is categorized as a developing country by World Bank, in opposite to the U.S. as a developed country. 

We cluster the equity markets in the world according to the empirical covariance structure of their performance, using Algorithms \ref{algo::offline_known_k} and \ref{algo::online_known_k} as purposed in this paper. Then we compare our clustering results with the traditional clustering methodologies. The index constituents of MSCI ACWI (All Country World Index) are selected as the sample data. Each of the observations is a sample path representing the historical monthly returns of underlying economic entities. Through empirical study it is proved that these indexes returns exhibit the \enquote{long memory} path feature hence they can be modeled by self-similar processes such as fractional Brownian motions (see e.g. \cite{Comte1998,Bianchi2008}). Therefore similar to Section \ref{sec::sim_gauss} we may cluster the increments of the indexes returns with the $\log^*$-transformed dissimilarity measure $\widehat{d^{**}}$.  MSCI ACWI is the leading global equity market index and has \$3.2 billion in underlying market capitalization\footnote{As of June 30, 2017, as reported on September 30, 2017 by eVestment, Morningstar and Bloomberg.}. MSCI ACWI contains 23 developed markets, 24 emerging markets from 4 regions: Americas, EMEA (Europe, Middle East and Africa), Pacific and Asia. Table \ref{table::msci_data} lists all markets included in this empirical study. We exclude Greece market due to its bankruptcy after the global financial crisis. 

We construct both offline and online datasets starting from different dates. For offline dataset we let it start from Jan. 30, 2009 to exclude the financial crisis period in 2007 and 2008. This is because, under global stock market crisis, the (downside) performance of equity market is contagious and thus blurs the cluster analysis. The online dataset starts on Jan. 31, 1989, which covers 1997 Asian financial crisis, 2003 dot-com bubble and 2007 subprime mortgage crisis. Another key feature is that 14 markets are added to the MSCI ACWI index (at different time) since 1989, including 1 developed market and 13 emerging markets. Therefore, the case where new time series are observed is handled in online dataset. 

\begin{sidewaystable}[htbp]
\centering
\caption{The categories of major equity markets in the MSCI ACWI (All Country World Index). There are 23 markets from developed economic entities, and 24 markets from emerging countries or areas. The geographical clustering contains Americas, EMEA (Europe, Middle East and Africa), Pacific and Asia. \label{table::msci_data}}
\vspace{0.5em}
\begin{tabular}{ccccccc}
 \hline\hline
 \multicolumn{3}{c}{Developed Markets} & & \multicolumn{3}{c}{Emerging Markets} \\
 \cmidrule(r){1-3} \cmidrule(r){5-7}
Americas	&	Europe \& Middle East	&	Pacific	& &	Americas	&	Europe \& Middle East \& Africa	&	Asia	\\
 \cmidrule(r){1-3} \cmidrule(r){5-7}
Canada	&	Austria	&	Australia &	&	Brazil	&	Czech Republic	&	China (Mainland)	\\
USA	&	Belgium	&	 Hong Kong	& &	Chile	&	Greece	&	India	\\
	&	Denmark	& Japan	&	& Colombia	&	Hungary	&	Indonesia	\\
	&	Finland	&	New Zealand &	&	Mexico	&	Poland	&	Korea	\\
	&	France	&	Singapore &	&	Peru	&	Russia	&	Malaysia	\\
	&	Germany	&		& &		&	Turkey	&	Pakistan	\\
	&	Ireland	&		& &		&	Egypt	&	Philippines	\\
	&	Israel	&		& &		&	South Africa	&	Taiwan	\\
	&	Italy	&		& &		&	Qatar	&	Thailand	\\
	&	Netherlands	&		& &		&	United Arab Emirates	&		\\
	&	Norway	&		& &		&		&		\\
	&	Portugal	&		& &		&		&		\\
	&	Spain	&		& &		&		&		\\
	&	Sweden	&		& &		&		&		\\
	&	Switzerland	&		& &		&		&		\\
	&	United Kingdom	&		& &		&		&		\\
 \hline \hline
\end{tabular}
\begin{flushleft} 
\scriptsize\textbf{Source:} {MSCI ACWI (All Country World Index) market allocation. \url{https://www.msci.com/acwi}.}\\
\end{flushleft}
\end{sidewaystable}

\subsubsection{Clustering Results}
We compare the clustering outcomes of both offline and online datasets with separations suggested by region (4 groups) and development level (2 groups). The factor with the lowest misclassification rate is proved to be the corresponding factor that contributes to increase covariance-based dissimilarity measure the most. In other words, this corresponding factor leads to the clustering of stock markets with the most significant impact. 

Table \ref{table::emp_results} shows that the misclassification rates for development levels are significantly and consistently lower than that of geographical region, for both algorithms (offline and online algorithms) and datasets (offline and online datasets). The clustering results seem to infer that the geographical distance is less dominating than the development level of underlying economic entities, when analyzing different groups of equity markets.

\begin{table}[htbp]
\centering
\caption{The misclassification rates of clustering algorithms on datasets, comparing to clusters suggested by geographical region and development levels. \label{table::emp_results}}
\vspace{0.2em}
\begin{tabular}{ccccc}
 \hline\hline
 & \multicolumn{2}{c}{offline algorithm} & \multicolumn{2}{c}{online algorithm} \\
 \cmidrule(r){2-3} \cmidrule(r){4-5}
	&	region	&	development level	&	region	&	development level	\\
\cmidrule(r){2-3} \cmidrule(r){4-5}
offline dataset	&	63.04\%	&	28.26\%	&	60.87\%	&	23.91\%	\\
online dataset	&	59.57\%	&	44.68\%	&	57.45\%	&	38.30\%	\\
 \hline \hline
\end{tabular}
\end{table}

The global minimum of the misclassification rate occurs when we use online algorithm on offline dataset. Table \ref{table::emp_clu_results} presents the detailed clustering outcome under this circumstance. In each group, the correctly and incorrectly categorized equity markets are listed respectively. For instance, China (Mainland) market is correctly categorized along with other emerging market. Meanwhile Austria market, though being developed market in MSCI ACWI, is categorized to the group where most of the equity markets are emerging markets. The misclassified markets in the emerging group are Austria, Finland, Italy, Norway and Spain markets. The misclassified markets in the developed group are Malaysia, Philippines, Taiwan, Chile and Mexico markets. These empirical results thus suggest that several capital markets have irregular post-crisis performance which blurs the barrier between emerging and developed markets.

\begin{table}[htbp]
\centering
\caption{The clustering outcome of equity markets using offline dataset (starting from Jan. 30, 2009) and online algorithm. The algorithm divides the whole dataset (excluding Greece) into two groups, and in each group the correctly and correctly separated markets are listed, respectively. \label{table::emp_clu_results}}
\vspace{0.2em}
\begin{tabular}{cc|cc}
 \hline\hline
 \multicolumn{2}{c}{Group 1 (Emerging Markets)} & \multicolumn{2}{c}{Group 2 (Developed Markets)} \\
 \cmidrule(r){1-2} \cmidrule(r){3-4}
Correct	&	Incorrect	&	Correct	&	Incorrect	\\ \hline
China (Mainland)	&	Austria	&	Belgium	&	Malaysia	\\
India	&	Finland	&	Denmark	&	Philippines	\\
Indonesia	&	Italy	&	France	&	Taiwan	\\
Korea	&	Norway	&	Germany	&	Thailand	\\
Pakistan	&	Spain	&	Ireland	&	Chile	\\
Brazil	&		&	Israel	&	Mexico	\\
Colombia	&		&	Netherlands	&		\\
PERU	&		&	Portugal	&		\\
Czech Republic	&		&	Sweden	&		\\
Hungary	&		&	Switzerland	&		\\
Poland	&		&	United Kingdom	&		\\
Russia	&		&	Australia	&		\\
Turkey	&		&	Hong Kong	&		\\
Egypt	&		&	Japan	&		\\
South Africa	&		&	New Zealand	&		\\
Qatar	&		&	Singapore	&		\\
United Arab Emirates	&		&	Canada	&		\\
	&		&	USA	&		\\
 \hline \hline
\end{tabular}
\end{table}

The contribution of this real-world dataset cluster analysis is two-fold. First, we explored and determined the principal force that brings structural difference in global capital markets, which potentially predicts the \enquote{comovement} pattern of future index performance. Second, we provided new evidence on the impact of globalization on breaking geographical barriers between economic entities.

\section{Conclusion and Future Perspectives}
Inspired by \citet{khaleghi2016}, we introduce the problem of clustering wide-sense stationary ergodic processes. A new covariance-based dissimilarity measure is proposed to obtain asymptotically consistent clustering algorithms for both offline and online settings. The recommended algorithms are competitive for at least two reasons: 
\begin{enumerate}
\item Our algorithms are applicable to clustering  a wide class of stochastic processes, including any strict-sense stationary ergodic processes whose covariance structures are finite. 
\item Our algorithms are efficient enough in terms of their computational complexity cost. In particular, a so-called $\log^*$-transformation is introduced to improve the efficiency of clustering, for self-similar processes.  
\end{enumerate}
The above advantages have been supported through the simulation study on  non-Gaussian discrete-time processes, fractional Brownian motions, non-Gaussian non strict-sense stationary ergodic $AR(1)$ processes, and a real-world application: clustering global equity markets. The implementations in MATLAB of our clustering algorithms are provided publicly online.

Finally we note that, the clustering framework proposed in our paper focuses on the cases where the true number of clusters $\kappa$ is known. The case for which $\kappa$ is unknown is still open and left to future research. Another interesting problem is that, many stochastic processes are not wide-sense stationary but they get a tight relationship with the wide-sense stationarity. For example, a self-similar process does not necessarily have wide-sense stationary increments, but their Lamperti transformations are strict-sense stationary \citep{Lamperti1962}; locally asymptotically self-similar processes are generally not self-similar but their tangent processes are self-similar \citep{Boufoussi2008}. Our cluster analysis sheds light on clustering the above processes. These topics can be left for future research. \\ \\


\bibliographystyle{spbasic}
\bibliography{ml}
\end{document}